\newtheorem{theorem}{Theorem}
\newtheorem{lemma}{Lemma}
\let\oldemptyset\emptyset
\let\emptyset\varnothing
\newcommand{\ie}{\textit{i.e.}}
\newcommand\inv[1]{#1\raisebox{1.15ex}{$\scriptscriptstyle-\!1$}}
\let\emptyset\varnothing
\begin{document}

\title{Motion Planning (In)feasibility Detection using a Prior Roadmap via Path and Cut Search}

\author{\authorblockN{Yoonchang Sung\textsuperscript{1} and Peter Stone\textsuperscript{1,2}}
\authorblockA{
\textsuperscript{1}Department of Computer Science, The University of Texas at Austin, USA\\
Email: \{yooncs8, pstone\}@cs.utexas.edu\\
\textsuperscript{2}Sony AI}
}

\maketitle


\begin{abstract}
Motion planning seeks a collision-free path in a configuration space (C-space), representing all possible robot configurations in the environment. As it is challenging to construct a C-space explicitly for a high-dimensional robot, we generally build a graph structure called a \emph{roadmap}, a discrete approximation of a complex continuous C-space, to reason about connectivity. Checking collision-free connectivity in the roadmap requires expensive edge-evaluation computations, and thus, reducing the number of evaluations has become a significant research objective. However, in practice, we often face \emph{infeasible} problems:  those in which there is no collision-free path in the roadmap between the start and the goal locations. Existing studies often overlook the possibility of infeasibility, becoming highly inefficient by performing many edge evaluations. 

In this work, we address this oversight in scenarios where a prior roadmap is available; that is, the edges of the roadmap contain the probability of being a collision-free edge learned from past experience. To this end, we propose an algorithm called \emph{iterative path and cut finding} (\texttt{IPC}) that iteratively searches for a path and a cut in a prior roadmap to detect infeasibility while reducing expensive edge evaluations as much as possible. We further improve the efficiency of \texttt{IPC} by introducing a second algorithm, \emph{iterative decomposition and path and cut finding} (\texttt{IDPC}), that leverages the fact that cut-finding algorithms partition the roadmap into smaller subgraphs. We analyze the theoretical properties of \texttt{IPC} and \texttt{IDPC}, such as completeness and computational complexity, and evaluate their performance in terms of completion time and the number of edge evaluations in large-scale simulations.
\end{abstract}

\IEEEpeerreviewmaketitle


\section{Introduction}
\label{sec:intro}

Motion planning~\cite{lozano1979algorithm} is a crucial functionality for autonomous robots that enables them to move from one configuration to another without colliding with obstacles. The main objective of most motion planing algorithms is efficiency, that is, finding a solution as quickly as possible. 

Most existing algorithms start from the assumption that a successful motion plan exists, only terminating with failure after exhaustively searching the plan space or exceeding a time budget~\cite{basch2001disconnection,bretl2004multi,hauser2005learning,zhang2008efficient,mccarthy2012proving,rodriguez2019iteratively,li2020towards,varava2021free}.  However, in practical settings, it may not be uncommon for planning problems to be \emph{infeasible}.  For example, it may be impossible to grasp an object due to nearby objects blocking the way or the robot’s kinematic limitations.
The prospect of plan infeasibility is particularly salient in \emph{task and motion planning}~\cite{hauser2009integrating,garrett2021integrated}, where many motion planning subproblems are considered.
In this paper, we address the problem of detecting infeasibility (while still efficiently finding a solution when one exists), particularly in scenarios where learning from past experience is available.


\begin{figure}[t]
\centering
\includegraphics[width=0.35\textwidth]{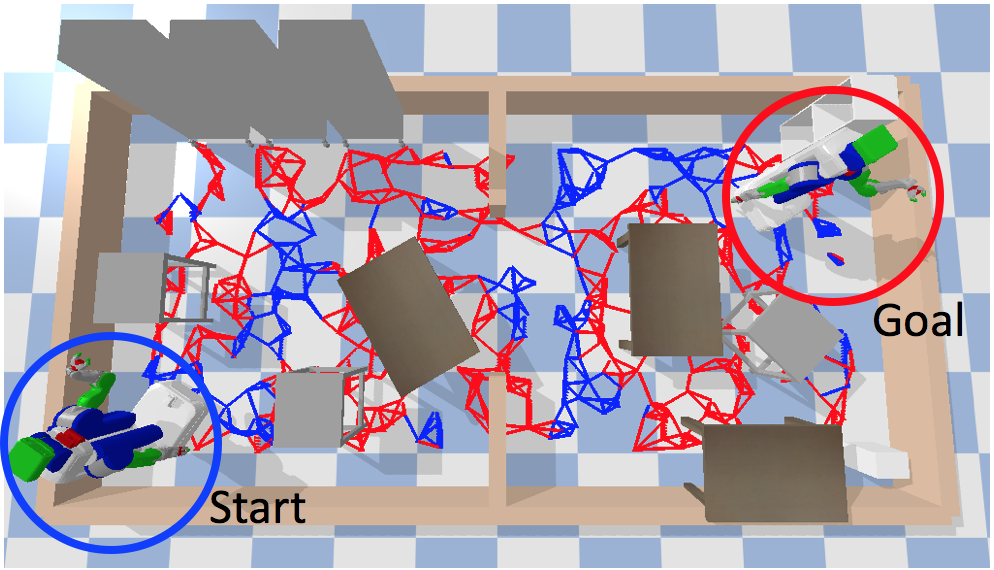}
\caption{Navigation task for a robot, where the blue and red circles represent the start and goal, respectively. A roadmap example of $500$ vertices is represented by red and blue lines. Red edges collide with an obstacle, whereas blue edges are collision-free.  An edge’s collision status (and thus its color) is initially unknown and can only be revealed by applying edge evaluation by a motion planner. Given a roadmap with prior probabilities over edges’ existences (based on past experience), the objective is to determine, with as few edge evaluations as possible, whether there is a collision-free path (a path on all blue edges) from the start to the goal.
}
\label{fig:navigation}
\end{figure}

Specifically, we focus on Step 2 of a three-step learning-based framework for obtaining motion plans (or determining their non-existence) from past experience. In \textbf{Step 1} of this overall framework, a representative \emph{roadmap}~\cite{dobson2014sparse,coleman2015experience} is learned from available training problems. This roadmap is a graph that approximately captures connectivity in the configuration space (C-space~\cite{lozano1990spatial}) as depicted in Figure~\ref{fig:navigation}. Roadmaps are widely used in motion planning, such as \emph{sampling-based motion planners}~\cite{choset2005principles,lavalle2006planning,elbanhawi2014sampling,mcmahon2022survey} which construct a roadmap to discretize a C-space and \emph{search-based planners}~\cite{cohen2010search} which generate a grid map corresponding to a roadmap. In \textbf{Step 2}, when presented with a query problem, we attempt to find a solution using the roadmap from Step 1, or alternatively to determine that no solution exists (\ie, infeasibility in the roadmap). In \textbf{Step 3}, if a solution has been found in Step 2, its quality is refined. On the other hand, if no solution was found, an attempt is made to prove infeasibility in the continuous C-space. Alternatively, points can be added to the roadmap from Step 1 to make it denser before repeating Step 2. In this work, we specifically concentrate on Step 2 by assuming the existence of a roadmap from Step 1, and we leave Step 3 for future work.


Roadmaps are a useful data structure for accumulating past planning results. Edges in a roadmap are hypothesized to be collision-free paths between their endpoints.  However, they may not actually be collision-free in the real world; in practice, they must be evaluated by a motion planner to check for collisions. We can exploit past experience to obtain the probability of local connectivity between vertices in a roadmap (\ie, their connecting edge being collision-free) and use the roadmap as prior knowledge to solve a query problem. 

One possible approach to obtaining connection probabilities is to use the \emph{counting principle}. Given training problems that may be either feasible or infeasible, whose C-spaces vary due to various numbers, shapes, and locations of obstacles, we generate a single representative roadmap by attempting to solve them with a given motion planner. Then, for each edge in the roadmap, we count the number of problems that the edge is confirmed collision-free out of the total number of problems; we apply the same procedure for all edges to form their existence probabilities. 
It is important to note that our method also works in cases where there is no prior knowledge, and in such cases, all edge connection probabilities are set to $0.5$ (see Section~\ref{subsec:comparison}).

Previous studies~\cite{choudhury2016pareto,narayanan2017heuristic,hou2020posterior} have leveraged prior roadmaps to find the shortest path between nodes. Their focus is on reducing the number of edge evaluations since edge evaluation is known to consume most of the computation time in motion planning due to its expensive collision-checking procedures~\cite{bohlin2000path,hauser2015lazy,bialkowski2016efficient,haghtalab2018provable}. However, their methods assume that a prior roadmap \emph{always} contains a solution, resulting in an ineffective approach to dealing with infeasible problems.
The search space of previous work is mostly a space of paths or edges; therefore, infeasible problems can only be determined after evaluating all possible paths or combinations of edges.

To detect infeasibility from a prior roadmap, we propose a different search strategy, namely \emph{iterative search over a path space and a cut space}. Our insight is that the search over both spaces can identify whether a given problem is feasible efficiently, whereas trying to find a path in an infeasible problem or trying to find a cut in a feasible problem requires exploring the entire search space.

Furthermore, the result of a single path search execution can help guide the search for a cut and vice versa. We apply edge evaluations to edges found by the path search and then determine their ground-truth existence. Some edges may be identified as non-existing (\ie, colliding with obstacles), which can be leveraged by the cut search since a ground-truth cut \emph{must} contain one of those non-existing edges. Similarly, a ground-truth path \emph{must} include one of the existing edges identified by the cut search. 

Based on this observation, we design a \emph{complete} search algorithm that identifies feasibility of a given problem. We also propose a \emph{divide-and-conquer} algorithm that further improves efficiency by exploiting an abstract graph data structure over a roadmap. 

Our algorithms can complement existing methods by acting as preprocessors so that infeasible problems are identified as quickly as possible. If a given problem is confirmed feasible, we switch to existing approaches to find the shortest path.

In summary, we make three main contributions in this paper.
\begin{itemize}
\item We introduce two  probabilistic roadmap-based algorithms that perform path and cut searches to determine whether a query problem is feasible efficiently.
\item We analyze both algorithms’ completeness and computational complexity.
\item We empirically verify the performance of our methods through extensive simulations.
\end{itemize}




\section{Problem Description}
\label{sec:prob}

Let $G=(V,E,p)$ be a weighted graph representing a prior roadmap learned from past experience. 
\begin{itemize}
\item $V$ is a vertex set. Each vertex $v\in V$ corresponds to a robot configuration in the C-space. 
\item $E$ is an edge set whose element is denoted by $e\in E$.
\item $p:E\rightarrow[0, 1]$ represents a Bernoulli probability of edge existence over all edges. Edge existence implies that the robot can traverse from one vertex to another without colliding with obstacles. A ground-truth edge existence can only be revealed by applying edge evaluation. $p(e)=1$ indicates that the edge is known with certainty to exist, whereas $p(e)=0$ indicates that the edge is known not to exist. $0<p(e)<1$ indicates that the edge needs to be evaluated to determine whether the path from vertex to vertex is collision-free.
\end{itemize}


Start and goal configurations ($v_s$ and $v_g$) are given at query time, which can be connected to near vertices in $G$ if edge evaluation guarantees no collisions; this is a typical process for multi-query planners, such as PRM~\cite{kavraki1996probabilistic}. 
For those two new edges connecting $v_s$ and $v_g$ to $G$, $p=1$. For notational convenience, we treat the enlarged roadmap the same as $G$, including $v_s$ and $v_g$.

The objective of the feasibility detection problem is to determine whether a collision-free path connecting $v_s$ and $v_g$ exists in $G$ (and if so, to identify such a path) while minimizing the number of edge evaluations as much as possible. 

\section{Algorithms}
\label{sec:alg}

In this section, we propose an algorithm (\texttt{IPC}) that iteratively searches over the path and cut spaces to detect feasibility. We then present another algorithm (\texttt{IDPC}) based on \texttt{IPC} that effectively decomposes the search space to improve efficiency further.

\subsection{Iterative path and cut finding (\texttt{IPC})}
\label{subsec:ipc}

Since all we care about from $G$ is connectivity between a start and a goal vertex, we treat the edge-evaluation process as a black box computation. Any information on the configuration space in which $G$ is embedded is irrelevant to the algorithm design; our algorithms are agnostic to configuration values and their dimension. We thus focus only on the graph structure of $G$ in designing algorithms.

Our idea is to leverage existing off-the-shelf path-finding and cut-finding algorithms, both highly efficient due to the long history of their individual developments. A path-finding algorithm is used to certify connectivity in $G$, while a cut-finding algorithm confirms disconnectivity. We obtain the most probable path and cut as candidates from those algorithms and apply edge evaluations to check their ground-truth existence.\footnote{In the rest of the paper, we denote the output of a pathfinding or cut-finding algorithm as a \emph{candidate path or cut}, as their ground-truth existence has not yet been confirmed, and a ground-truth path or cut as simply a \emph{path or cut}.}

\begin{figure}[!htb]
\centering
\subfigure[Pathfinding result (path edges colored in blue).]{\includegraphics[width=0.45\columnwidth]{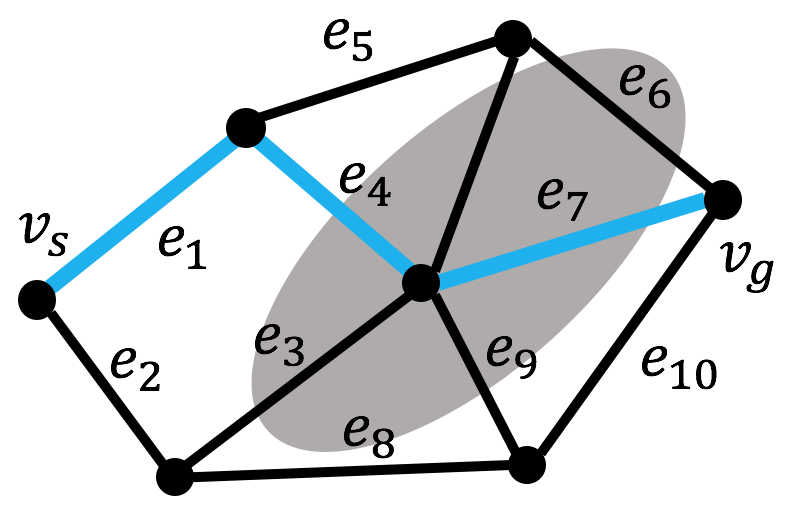}}
\subfigure[Cut-finding result (cut edges colored in red).]{\includegraphics[width=0.45\columnwidth]{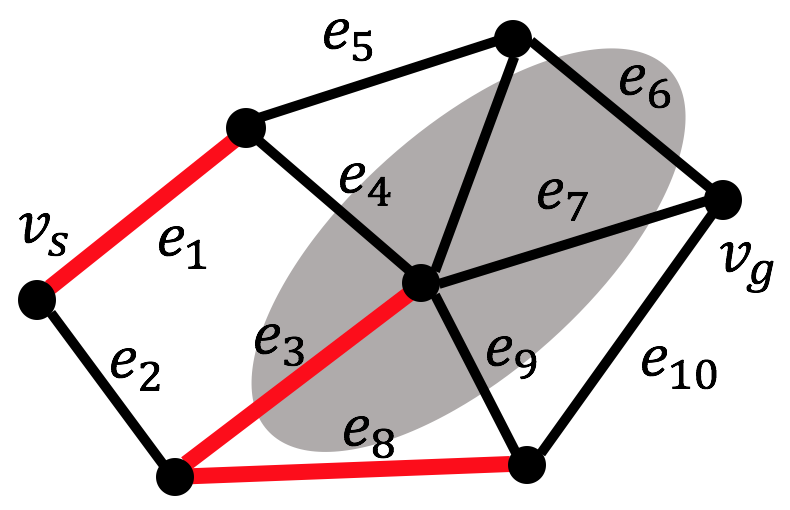}}
\caption{Examples of pathfinding and cut-finding executions. The gray shape represents an obstacle. In (a), \texttt{IPC} evaluates the existence of $e_1$, $e_4$, and $e_7$ and learns that a cut must contain (at least) one of $e_4$ and $e_7$ since $e_4$ and $e_7$ are found not to exist. In (b), \texttt{IPC} evaluates $e_1$, $e_3$, and $e_8$ and learns that a path must pass through (at least) one of $e_1$ and $e_8$ since $e_1$ and $e_8$ are collision-free.
} 
\label{fig:path_cut}
\end{figure}

As such, we propose an algorithm to determine the feasibility of a given problem through the path and cut search in $G$ without requiring too many edge evaluations, which we call \texttt{IPC}. \texttt{IPC} iteratively applies path-finding and cut-finding algorithms while the result of one informs the search for another in the next iteration and terminates when either a path or a cut is found. Figure~\ref{fig:path_cut} illustrates both scenarios.

Since $G$ is a positively weighted graph, we use Dijkstra's algorithm to find the most probable candidate path, a finite sequence of edges $P=(e_i)_{i=1}^{n}$. To find the most probable candidate cut, a finite edge set $C=\{e_j\}_{j=1}^{m}$,\footnote{Off-the-shelf minimum cut algorithms generally do not output a sequence of edges but an edge set instead, since as illustrated in Figure~\ref{fig:path_cut} (b), edges in a cut need not be adjacent.} we adopt the Push--relabel algorithm~\cite{cherkassky1997implementing} with its state-of-the-art efficiency. We present the time complexity of existing cut-finding algorithms in Table~\ref{table:complexity}.

\begin{table}[ht]
\begin{center}
\begin{tabular}{c|c}
\hline
Algorithms & Complexity \\
\hline
\hline
Ford--Fulkerson algorithm~\cite{cormen2022introduction} & $\mathcal{O}(|V|^2|E|)$ \\
Edmond--Karp algorithm~\cite{edmonds1972theoretical} & $\mathcal{O}(|V||E|^2)$ \\
Push--relabel algorithm~\cite{cherkassky1997implementing} & $\mathcal{O}(|V|^2\sqrt{|E|})$ \\
\hline
\end{tabular}
\caption{Complexities of minimum cut algorithms.}
\label{table:complexity}
\end{center}
\end{table}

We initialize edge values over $E$ using $p$.
Both Dijkstra’s and the Push--relabel algorithms reason over the \emph{sum} of edge values, whereas finding the most probable candidate path or cut requires reasoning over the \emph{product} of probabilities. To correct this mismatch, our algorithms reason over logarithmic $p$ values.
Moreover, a path-finding algorithm seeks high $p$ values, while a cut-finding algorithm prefers low $p$ values. 
In the end, we augment $G$ with the \emph{weight} and \emph{capacity} used for finding a candidate path and cut, respectively.\footnote{Weight and capacity are terminologies for edge values used in the path-finding and cut-finding literature, respectively.} We denote the augmented graph by $\overline{G}=(V, E, p, p_w, p_c)$, where $p_w$ and $p_c$ are weight and capacity values over $E$ such that $p_w, p_c:E\rightarrow[0, \infty)$. For $0<p<1$, we compute $p_w$ and $p_c$ as follows.
\begin{equation*}
\begin{split}
p_w&=\log(1/p), \\
p_c&=\log(1/(1-p)).
\end{split}
\end{equation*}

Deterministic edges with $p$ values of $0$ or $1$ do not need evaluations. 
To handle those edges, we set $p_w=\infty$ and $p_c=0$ when $p=0$ and $p_w=0$ and $p_c=\infty$ when $p=1$. 
The value of $\infty$ is used to ensure that edges with this value will not be chosen as a part of the candidate path or cut.
By doing so, we ensure that if a path exists, it must pass through (at least) one of the edges identified as existing by cut finding and that if a cut exists, it must contain (at least) one of the edges identified as non-existing by pathfinding.

\begin{algorithm}
\SetAlgoLined
\SetKwInOut{Input}{Input}
\SetKwInOut{Output}{Output}
\SetKwFunction{ExecutePathfinding}{ExecutePathfinding}
\SetKwFunction{ExecuteCutFinding}{ExecuteCutFinding}
\SetKwFunction{ChooseCutEdge}{ChooseCutEdge}
\SetKwFunction{ResetEdgeValues}{ResetEdgeValues}
\SetKwFunction{EvaluateEdgeExistence}{EvaluateEdgeExistence}
\Input{$\overline{G}=(V, E, p, p_w, p_c), v_s, v_g$}
\Output{$C$ or $P$}


\While{True} {
$P\leftarrow$\ExecutePathfinding$(V, E, p_w, v_s, v_g)$

\If{\EvaluateEdgeExistence$(P)$}{
\Return{$P$} \tcp{A feasible problem.}
}

$p_c\leftarrow$\ChooseCutEdge$(P, p_c)$

$C\leftarrow$\ExecuteCutFinding$(V, E, p_c, v_s, v_g)$

\If{\EvaluateEdgeExistence$(C)$}{
\Return{$C$} \tcp{An infeasible problem.}
}

$p_c\leftarrow$\ResetEdgeValues$(P, p_c)$
}

\caption{\texttt{IPC}}
\label{alg:ipc}
\end{algorithm}

The pseudo-code of \texttt{IPC} is included in Algorithm~\ref{alg:ipc}. Dijkstra's algorithm (line $2$) and the Push--relabel algorithm (line $7$) are iteratively applied. 
In lines $3$ and $8$, \texttt{EvaluateEdgeExistence} performs edge evaluations on every edge in $P$ or $C$ and returns \textsc{true} if a path or cut is found.
\texttt{EvaluateEdgeExistence} also updates $p_w$ and $p_c$ values depending on whether corresponding edges are in-collision (updating $p_w$ to $\infty$ and $p_c$ to $0$) or collision-free (updating $p_w$ to $0$ and $p_c$ to $\infty$). 

A candidate path $P$ found may contain multiple collision edges, and at least one of those edges must be a part of any cut (if a cut exists). Then, the cut-finding algorithm searches for a candidate cut that includes one of these edges.  We observe that choosing a center edge from the largest sequence of consecutive collision edges performs well in practice. We set $p_c$ for the chosen edge from $P$ to be $0$ while setting $p_c$ for the rest of the edges from $P$ to be $\infty$ (line 6), ensuring that a candidate cut must contain the chosen edge to disconnect $\overline{G}$. After confirming that a candidate cut $C$ found is not a cut, we reset the $p_c$ values of the other collision edges from $P$ back to $0$ for the next iteration (line 11). 
One may also apply a similar strategy to pathfinding by heuristically selecting one particular edge from $C$. However, it is more complex because $C$ consists of an unordered set of edges that are not necessarily adjacent (as can be seen in Figure~\ref{fig:path_cut} (b)). We leave full specification of such optimization to future work; pathfinding in Algorithm~\ref{alg:ipc} is applied over the entire graph.

\textbf{Completeness}: 
\texttt{IPC} is guaranteed to terminate by finding either a path or a cut.
\begin{theorem}
\label{thm:ipc}
\texttt{IPC} is complete.
\end{theorem}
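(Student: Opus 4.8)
The plan is to show that \texttt{IPC} terminates, since partial correctness (returning $P$ only when $P$ is a genuine path, returning $C$ only when $C$ is a genuine cut) is immediate from the \texttt{EvaluateEdgeExistence} checks in lines $3$ and $8$. So the whole content is a termination argument, and the natural way to get termination is to exhibit a strictly decreasing measure on a well-founded set. I would use the number of \emph{undetermined} edges, i.e. edges $e$ with $0<p(e)<1$ that have not yet been evaluated; call this $N$, a nonnegative integer. The key claim is that every iteration of the \texttt{while} loop that does not return strictly decreases $N$.

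To establish the key claim I would argue as follows. Fix an iteration that does not return. First consider the candidate path $P$ returned by \texttt{ExecutePathfinding}. Because all edges with $p=0$ have $p_w=\infty$, Dijkstra never selects them, so every edge of $P$ has $p_w<\infty$, i.e. $p(e)>0$. The iteration does not return via line $4$, so \texttt{EvaluateEdgeExistence}$(P)$ is \textsc{false}, meaning at least one edge of $P$ is in collision; that edge had $p(e)<1$ (a $p=1$ edge cannot be in collision) and was not yet evaluated (an already-evaluated edge would have $p\in\{0,1\}$), hence it was an undetermined edge, and after evaluation its $p$ becomes $0$. Therefore $N$ drops by at least one in this iteration, regardless of what the subsequent cut search does. (One should also note that the cut search and the \texttt{ChooseCutEdge}/\texttt{ResetEdgeValues} bookkeeping only manipulate $p_c$ values, never re-inflate an already-decided $p$, so $N$ is monotone nonincreasing throughout, and the path-phase strict decrease is not undone.) Since $N\ge 0$ and strictly decreases each non-returning iteration, there can be only finitely many iterations; the loop must exit via line $4$ or line $9$, proving termination and hence completeness.

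The main obstacle — really the only subtle point — is justifying that a non-returning iteration \emph{always} makes progress: one has to be careful that it is the \emph{path} phase, not the cut phase, that is guaranteed to reveal a fresh in-collision edge, because the cut phase is artificially constrained (via $p_c$) to pass through a pre-selected edge and its candidate cut $C$ might consist entirely of edges that were already evaluated as non-existing, yielding no new information. So the argument must lean on the fact that the loop only continues past line $4$ when $P$ contains a collision edge, and that such an edge is necessarily undetermined-and-now-resolved. A secondary point worth stating explicitly is the base case: if $N=0$ then every edge is deterministic, so $\overline G$ restricted to $p=1$ edges either connects $v_s$ to $v_g$ — in which case the very first \texttt{ExecutePathfinding} call returns a path of all-$p_w=0$ edges and line $4$ fires — or it does not, in which case the $p=0$ edges with $p_c=0$ form a cut and line $9$ fires; either way the loop cannot run an $(N=0)$-iteration without returning, which dovetails with the strict-decrease argument to close the induction.
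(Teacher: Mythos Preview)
Your proof is correct and follows essentially the same approach as the paper's: both argue termination by observing that each iteration forces the evaluation of at least one previously-unevaluated edge and that the edge set is finite. Your version is in fact more careful than the paper's in isolating the path phase as the guaranteed source of progress (and in flagging that the constrained cut phase need not reveal fresh edges), but the underlying termination-via-decreasing-counter idea is identical.
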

\begin{proof}
Note again that pathfinding and cut-finding applications reveal the ground-truth existence of edges. Thus, if \texttt{IPC} can evaluate all edges in $\overline{G}$ before termination without missing some edges or resulting in an infinite loop, the feasibility of a given problem can be known. After edge evaluation, the values of $p_w$ and $p_c$ will become either $0$ or $\infty$. 
Those edges from $p_w$ and $p_c$ assigned to the value of $\infty$ will not be chosen by pathfinding or cut finding; both algorithms always explore new edges that have not yet been evaluated.
Since $\overline{G}$ has a finite number of edges, \texttt{IPC} will eventually evaluate all edges in the worst case. When the ground-truth existence of all edges is known, $\overline{G}$ must contain either a path or a cut. Thus, one of them is guaranteed to be found.
\end{proof}

The same guarantee also holds in the continuous C-space when $|V|\rightarrow\infty$, as a roadmap $\overline{G}$ is a discrete approximation. Counterparts of a path and cut in the continuous C-space are a one-dimensional curve and ($d-1$)-dimensional hyperplane when the ambient C-space is $d$-dimensional. If a connected curve from a start and goal exists, all hyperplanes meeting the curve must have at least one hole; otherwise, there cannot be a connected curve.

If $|V|$ is finite, the infeasibility proof provided by Theorem~\ref{thm:ipc} is valid only for the roadmap. Step 3 of the learning framework introduced in Section~\ref{sec:intro} aims to convert cut edges into ($d-1$)-dimensional hyperplanes to ensure infeasibility in the continuous C-space. The method~\cite{li2023sampling} can be invoked in this step to learn a separating hyperplane that disconnects the start and goal. 

\begin{figure}[!htb]
\centering
\includegraphics[width=0.40\textwidth]{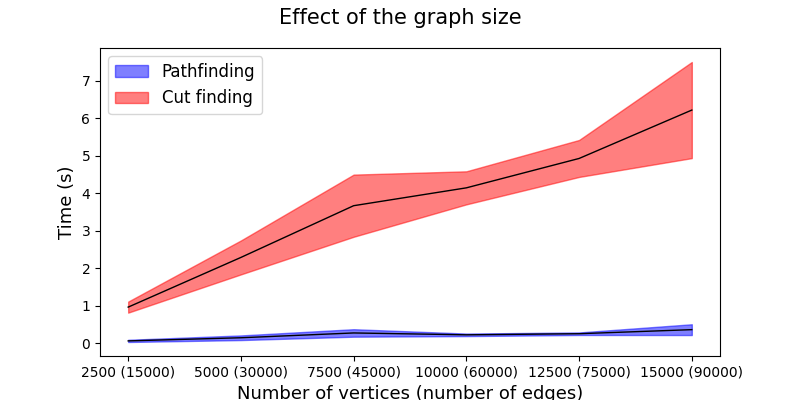}
\caption{A comparison of single instance execution time taken by Dijkstra's and the Push--relabel algorithms as a function of $|\overline{G}|$. The plot shows the mean and $95\%$ confidence interval from $10$ runs.
}
\label{fig:compute}
\end{figure}

\textbf{Complexity}: In \texttt{IPC}, the majority of computation occurs in finding a candidate cut (\ie, $\mathcal{O}(|V|^2\sqrt{|E|})$ of Push--relabel). Figure~\ref{fig:compute} shows the time comparison of Dijkstra's and the Push--relabel algorithms as the number of vertices and edges in $\overline{G}$ increases. We observe that the time taken for cut finding dominates that for pathfinding.

Motivated by this observation, we propose an improved algorithm over \texttt{IPC} in the next subsection, which decomposes $\overline{G}$ into smaller subgraphs so that the search space for cut finding becomes smaller. Since pathfinding computation is relatively cheap, we also present in Appendix~\ref{appen:more_pathfindings} the performance of \texttt{IPC} when increasing the number of pathfinding executions in each iteration.

\subsection{Iterative decomposition and path and cut finding (\texttt{IDPC})}
\label{subsec:idpc}

For the second algorithm, we exploit the fact that a candidate cut splits $\overline{G}$ into two separate \emph{induced subgraphs}\footnote{An induced subgraph is a special case of 
a subgraph, which satisfies not only that its vertices are a subset of vertices in $\overline{G}$ but also that
it must contain all edges that exist in $\overline{G}$ whose both endpoint vertices exist in the induced subgraph.} that can be connected by some of the edges from the candidate cut if they are found to be collision-free (see Figure~\ref{fig:first_cut}) and that the two induced subgraphs can be obtained for free as a byproduct of executing the cut-finding algorithm. For convenience, we refer to induced subgraphs as subgraphs. We search for \emph{local} candidate cuts from individual subgraphs to determine their disconnectivity, further splitting them into even smaller subgraphs. Meanwhile, we aggregate this local information to determine \emph{global} disconnectivity between the start and goal. Consequently, the cut-finding algorithm runs on a smaller graph instead of $\overline{G}$.
We call this version iterative decomposition and path and cut finding (\texttt{IDPC}). Since \texttt{IDPC} iteratively decomposes $\overline{G}$ into multiple subgraphs and composes the results of local cut findings, it can be seen as a \emph{divide-and-conquer} approach.

\begin{figure}[!htb]
\centering
\includegraphics[width=0.35\textwidth]{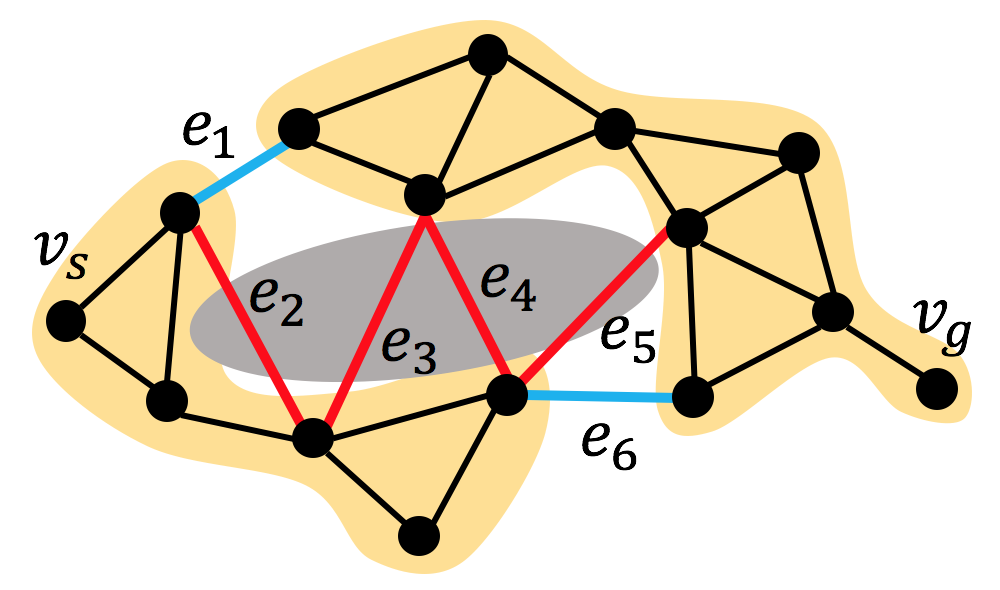}
\caption{
Description of a decomposed graph $\overline{G}$ and two separate induced subgraphs (marked by a yellow shape) generated by a candidate cut computed by the cut-finding algorithm. A candidate cut consists of $\{e_1,...,e_6\}$, where $e_1$ and $e_6$ are collision-free, and $\{e_2,...,e_5\}$ are collision edges as determined by evaluating them. As a result, the two induced subgraphs are connected by $e_1$ and $e_6$.
}
\label{fig:first_cut}
\end{figure}

\begin{figure}[!htb]
\centering
\includegraphics[width=0.35\textwidth]{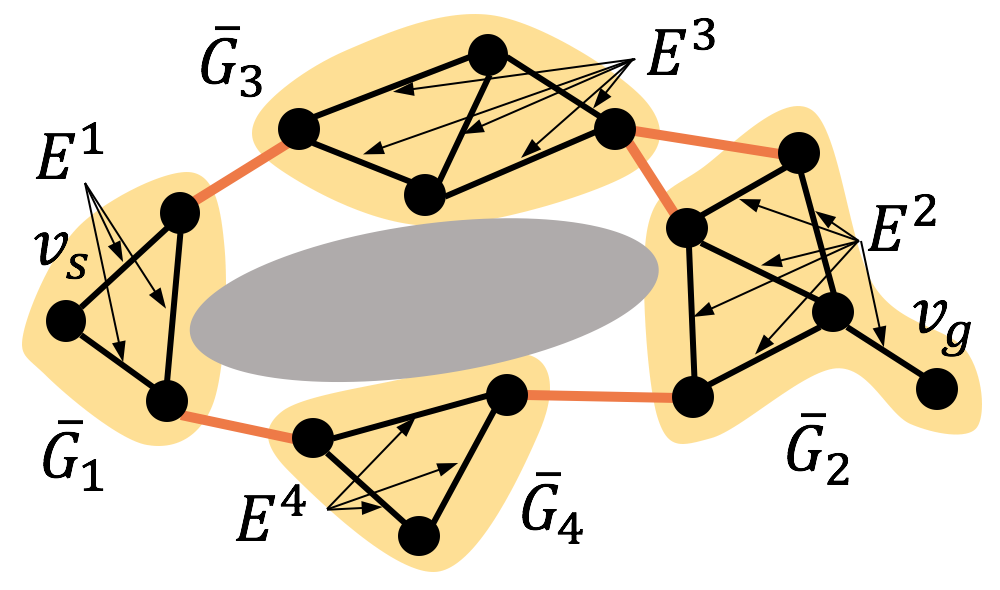}
\caption{Description of subgraphs generated by applying cut findings from the original graph. A yellow shape depicts each subgraph. Orange edges connecting neighboring subgraphs represent $\overline{C}$, confirmed as existing from a candidate cut after edge evaluations. Collision edges from a candidate cut will not be considered in further pathfindings or cut findings and, thus, are not drawn in this figure. In this example, $\{\overline{G}_k\}_{k=1}^4$ and orange edges together form $\overline{G}$, whose vertex set is $\cup_{k=1}^4V^k$.
}
\label{fig:subgraphs}
\end{figure}

In \texttt{IDPC}, the search space for a path is $\overline{G}$ whereas that for a cut is a set of connected components (\ie, subgraphs). We denote the set of subgraphs by $\{\overline{G}_k\}_{k=1}^g$, where $\overline{G}_k=(V^k, E^k, p^k, p_w^k, p_c^k)$. Subgraph edges satisfy the two conditions $\cup_{k=1}^g\{E^k\}=E\smallsetminus\cup C$ and $E^k\cap E^{k^\prime}=\oldemptyset$, as illustrated in Figure~\ref{fig:subgraphs}. The cut-finding algorithm identifies \emph{connecting edges} between subgraphs, and we only keep collision-free connecting edges in \texttt{IDPC}, denoted by $\overline{C}$. 
Notice that when $\overline{G}$ is split into $\overline{G}_1$ and $\overline{G}_2$ by $\overline{C}$, endpoint vertices of $\overline{C}$ in $\overline{G}_1$ form \emph{subgoals} for $v_s$, that is, any candidate paths must pass (at least) one of subgoals to reach $v_g$. Similarly, endpoint vertices of $\overline{C}$ in $\overline{G}_2$ form \emph{substarts} for $v_g$. As \texttt{IDPC} iteratively applies cut findings, any arbitrary subgraph $\overline{G}_k$ will contain substarts and subgoals unless completely disconnected from neighboring subgraphs.
Due to the different search spaces for path and cut, the termination condition for declaring path existence is the same as in \texttt{IPC}, but we need another method for cut existence. 

\begin{figure}[!htb]
\centering
\includegraphics[width=0.35\textwidth]{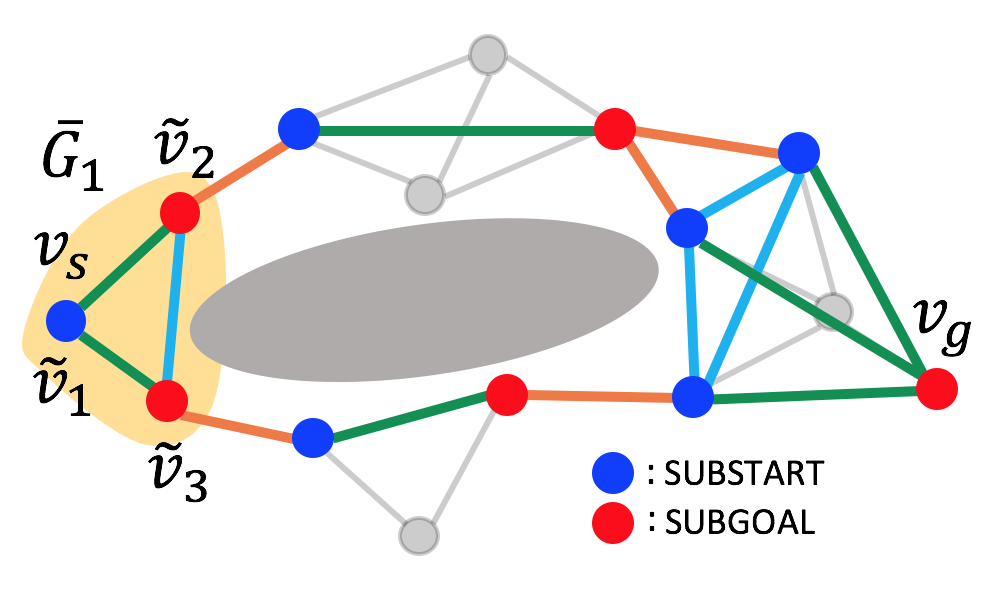}
\caption{Depiction of an abstract graph $\widetilde{G}$ generated from the example in Figure~\ref{fig:subgraphs}. Gray vertices and edges are not a part of $\widetilde{G}$ anymore. Blue vertices and red vertices correspond to substarts and subgoals, respectively. Green edges show the candidate connections between substarts and subgoals within the same subgraph. Orange edges represent $\overline{C}$, comprising the confirmed connections between subgraphs. Blue edges show the candidate connections among substarts or subgoals in the same subgraph. In $\overline{G}_1$, $\delta(\widetilde{v}_1)=v_s$ and $\Delta(\widetilde{v}_1\mbox{ or }\widetilde{v}_2\mbox{ or }\widetilde{v}_3)=1$. $\tau(\widetilde{v}_1)=\mbox{\textsc{substart}}$ and $\tau(\widetilde{v}_2\mbox{ or }\widetilde{v}_3)=\mbox{\textsc{subgoal}}$.
}
\label{fig:abstract_graph}
\end{figure}

For this purpose, we introduce an abstract graph $\widetilde{G}$, an undirected unweighted graph used for finding a cut in $\overline{G}$ from a set of cuts in $\{\overline{G}_k\}_{k=1}^g$. $\widetilde{G}$ ignores the detailed structure within $\overline{G}_k$ but captures the connectivity among $\{\overline{G}_k\}_{k=1}^g$ through $\overline{C}$ and the pairwise relationships among substarts and subgoals (Figure~\ref{fig:abstract_graph}). $\widetilde{G}$ must contain correspondence information concerning $\{\overline{G}_k\}_{k=1}^g$; thus, we define $\widetilde{G}$ as a tuple of $(\widetilde{V}, \widetilde{E}, c, \delta, \Delta, \tau)$ as follows.
\begin{itemize}
\item $\widetilde{V}$ is an abstract vertex set, where $\widetilde{v}\in\widetilde{V}$, corresponding to substarts and subgoals induced by $\overline{C}$ and $v_s$ and $v_g$ in $\overline{G}$. $\widetilde{V}$ satisfies that $\widetilde{V}\subseteq V$. In practice, $|\widetilde{V}|\ll|V|$.
\item $\widetilde{E}$ is an abstract edge set, where $\widetilde{e}\in\widetilde{E}$. Each $\widetilde{e}$ makes one of three types of connections: (1) \emph{candidate} connections between substarts and subgoals in the same $\overline{G}_k$; (2) \emph{confirmed} connections between substarts of $\overline{G}_k$ and subgoals of $\overline{G}_{k^\prime}$ enabled by $\overline{C}$; (3) \emph{candidate} connections among substarts or subgoals in the same $\overline{G}_k$. Candidate connections are those whose ground-truth connectivity has yet to be discovered. If any of the above candidate connections are identified by cut finding to be disconnected in $\overline{G}_k$, we do not maintain $\widetilde{e}$.
\item $c: \widetilde{E}\rightarrow \{\mbox{\textsc{true}}, \mbox{\textsc{false}}\}$ is a Boolean function, which maps $\widetilde{e}$ to \textsc{true} if a path between endpoint vertices of $\widetilde{e}$ has been identified in $\overline{G}_k$ or \textsc{false} otherwise, implying that a path may still exist.
\item $\delta: \widetilde{v}\rightarrow v$ maps $\widetilde{v}$ in $\widetilde{V}$ to $v$ in $V$ from which $\widetilde{v}$ is induced.
\item $\Delta: \widetilde{v}\rightarrow k$ maps $\widetilde{v}$ to an index $k$ of $\overline{G}_k$ to which $\delta(\widetilde{v})$ belongs.
\item $\tau: \widetilde{v}\rightarrow \{\mbox{\textsc{substart}}, \mbox{\textsc{subgoal}}\}$ is a Boolean function, which classifies the type of $\widetilde{v}$ into either \textsc{substart} or \textsc{subgoal}. 
$\exists \widetilde{v}=\inv\delta(v_s): \tau(\widetilde{v})=\mbox{\textsc{substart}}$, and $\exists \widetilde{v}=\inv\delta(v_g): \tau(\widetilde{v})=\mbox{\textsc{subgoal}}$.
\end{itemize}

Although $\widetilde{V}\subseteq V$, 
$\widetilde{G}$ is not a subgraph of $\overline{G}$ because $\widetilde{E}\not\subseteq E$. The reason for introducing the third type of abstract edge (\ie, blue edges in Figure~\ref{fig:abstract_graph}) is to cover the cases where the only feasible path visits the same subgraph multiple times by coming in and going out from a neighboring subgraph; without considering this case, \texttt{IDPC} is not complete. 

\begin{algorithm}
\SetAlgoLined
\SetKwInOut{Input}{Input}
\SetKwInOut{Output}{Output}
\SetKwFunction{ExecutePathfinding}{ExecutePathfinding}
\SetKwFunction{ExecuteCutFinding}{ExecuteCutFinding}
\SetKwFunction{ChooseCutEdge}{ChooseCutEdge}
\SetKwFunction{ResetEdgeValues}{ResetEdgeValues}
\SetKwFunction{EvaluateEdgeExistence}{EvaluateEdgeExistence}
\SetKwFunction{ReflectPathEvaluation}{ReflectPathEvaluation}
\SetKwFunction{InitializeAbstractGraph}{InitializeAbstractGraph}
\SetKwFunction{ChooseSubgraph}{ChooseSubgraph}
\SetKwFunction{ClusterSubstartsAndSubgoals}{ClusterSubstartsAndSubgoals}
\SetKwFunction{SubgraphPartition}{SubgraphPartition}
\SetKwFunction{CheckCutExistence}{CheckCutExistence}
\Input{$\overline{G}=(V, E, p, p_w, p_c), v_s, v_g$}
\Output{$C$ or $P$}


$g=1$ \tcp{$\overline{G}_{k=1}=\overline{G}$.}

$\widetilde{G}\leftarrow$\InitializeAbstractGraph$(\overline{G}_{k=1})$

\While{True} {
$P\leftarrow$\ExecutePathfinding$(V, E, p_w, v_s, v_g)$

\If{\EvaluateEdgeExistence$(P)$}{
\Return{$P$} \tcp{A feasible problem.
}
}

$\{\overline{G}_k\}_{k=1}^g, \widetilde{G}$, \texttt{subgraph\_ids}$\leftarrow$\newline\ReflectPathEvaluation$(\{\overline{G}_k\}_{k=1}^g, \widetilde{G}, P)$

$k^*\leftarrow$\ChooseSubgraph$(\{\overline{G}_k\}_{k=1}^g, P$, \texttt{subgraph\_ids}$)$

$p_c^{k^*}\leftarrow$\ChooseCutEdge$(P, p_c^{k^*})$

\texttt{substarts}, \texttt{subgoals}$\leftarrow$\newline\ClusterSubstartsAndSubgoals$(\overline{G}_{k^*}, \widetilde{G})$

$C_{k^*}\leftarrow$\ExecuteCutFinding$(V^{k^*}, E^{k^*}, p_c^{k^*}, v_s^{k^*}, v_g^{k^*}, \mbox{\texttt{substarts}, \texttt{subgoals}})$

$p_c^{k^*}\leftarrow$\ResetEdgeValues$(P, p_c^{k^*})$

$\{\overline{G}_{k^*}, \overline{G}_{k=g+1}\}, \widetilde{G}$\newline$\leftarrow$\SubgraphPartition$(\overline{G}_{k^*}, C_{k^*}, \widetilde{G})$

\If{\CheckCutExistence$(\widetilde{G})$}{
\Return{$C$} \tcp{An infeasible problem.}
}

}

\caption{\texttt{IDPC}}
\label{alg:idpc}
\end{algorithm}

With this new data structure, we now describe the pseudo-code of \texttt{IDPC} in Algorithm~\ref{alg:idpc}. The backbone of \texttt{IDPC} is similar to \texttt{IPC}, but it additionally includes $\{\overline{G}_k\}_{k=1}^g$ and $\widetilde{G}$ to reduce the search space for expensive cut finding. As initialization (lines 1 and 2), \texttt{IDPC} starts with a single subgraph equal to $\overline{G}$ and $\widetilde{G}$ consisting of two vertices (\ie, $\widetilde{v}_1=\inv\delta(v_s)$ and $\widetilde{v}_2=\inv\delta(v_g)$) and a single edge $\widetilde{e}_1$ connecting them. We also set $c(\widetilde{e}_1)=\mbox{\textsc{false}}$.

The process of searching for a candidate path $P$ is the same as \texttt{IPC} (lines 4-7). The result of edge evaluations over $P$ is used to update $\{\overline{G}_k\}_{k=1}^g$, assigning the values of $p_w^k$ and $p_c^k$ to either $0$ or $\infty$, depending on the collision status (line 8). If any portion of $P$ is a collision-free subpath (\ie, a collision-free path from a substart to a subgoal in a subgraph), \texttt{IDPC} sets the values of $c$ for the corresponding abstract edges in $\widetilde{G}$ to \textsc{true}. In line 9, \texttt{IDPC} chooses one subgraph $\overline{G}_{k^*}$ out of $\{\overline{G}_k\}_{k=1}^g$ to apply the cut-finding algorithm. In our experiments, we use a heuristic criterion to choose a subgraph that includes the largest number of collision edges in $P$. \texttt{IDPC} uses the same method as in \texttt{IPC} for selecting a particular edge used for a candidate cut within $\overline{G}_{k^*}$ (line 10).

\begin{figure}[!htb]
\centering
\includegraphics[width=0.35\textwidth]{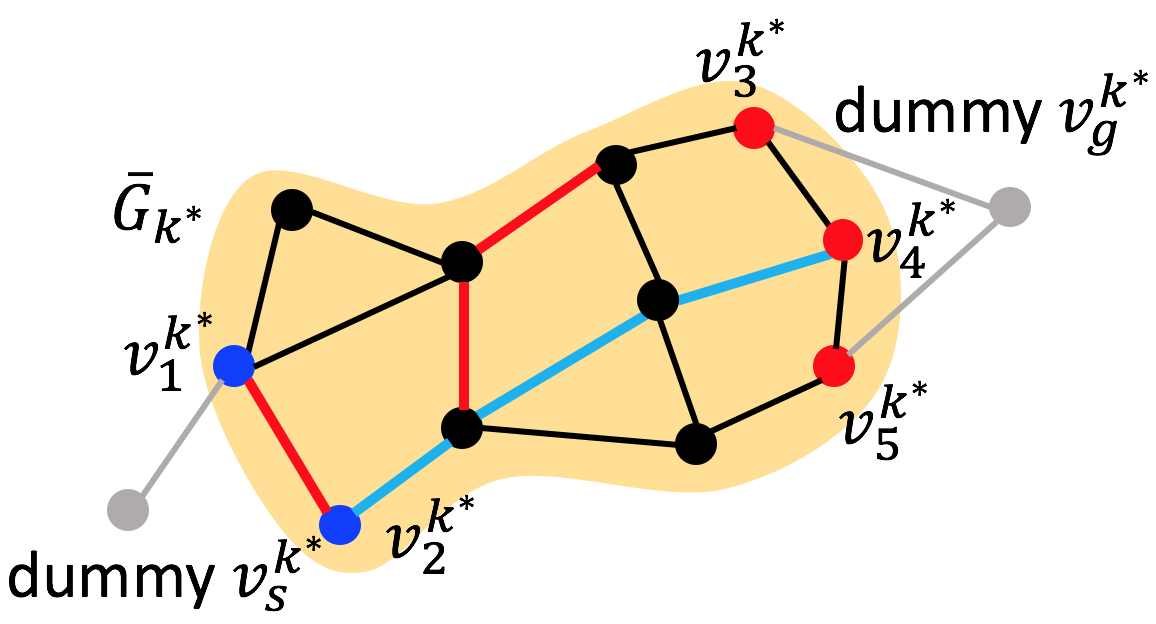}
\caption{Description of the clustering process for cut finding. Gray vertices ($v_s^{k^*}$ and $v_g^{k^*}$) and edges represent dummy. Blue vertices are substarts, and red vertices are subgoals. Note that $v_2^{k^*}$ and $v_4^{k^*}$ are not connected to dummy vertices because they are connected by a collision-free path colored in blue. The set of red edges is an example of a candidate cut that can disconnect $v_g^{k^*}$ from $v_s^{k^*}$ while not intersecting the blue path.
}
\label{fig:clustering}
\end{figure}

Unlike $\overline{G}$ which has a single start and a single goal, $\overline{G}_{k^*}$ may have multiple substarts and subgoals, making cut-finding algorithms inapplicable, since they can only accept one pair. Also, for the candidate connections between pairs of substart and subgoal (\ie, the first type of abstract edges), it is desirable if a single cut-finding execution identifies as many disconnections between pairs as possible rather than focusing on a single pair. Notice that the more disconnections between pairs are identified, the more balanced partition will likely be made in $\overline{G}_{k^*}$ compared to a partition obtained by trying to confirm the disconnection between a single pair. Neither method violates completeness, but the latter will likely incur more cut-finding executions, decreasing efficiency.

To handle the inapplicability issue and to encourage balanced cuts, we cluster substarts and subgoals in $\overline{G}_{k^*}$ with dummy start and goal vertices ($v_s^{k^*}$ and $v_g^{k^*}$), as shown in Figure~\ref{fig:clustering}, and use dummy vertices as input to cut finding (line 11). By setting the values of $p_c^{k^*}$ of edges connecting to dummy vertices to $\infty$, we assure that a candidate cut can only be found within $\overline{G}_{k^*}$. One important point is that \texttt{IDPC} does not connect dummy vertices to a pair of substart and subgoal whose corresponding abstract edge in $\widetilde{G}$ satisfies that $c(\widetilde{e})=\mbox{\textsc{true}}$. In Figure~\ref{fig:clustering}, a collision-free path colored in blue from $v_2^{k^*}$ to $v_4^{k^*}$ shows such a case; no cuts can separate $v_4^{k^*}$ from $v_2^{k^*}$, and thus, we leave them out of candidate-cut consideration. \texttt{IDPC} searches for a candidate cut $C_{k^*}$ within $\overline{G}_{k^*}$ in line 12 and applies the same edge value reset process as in \texttt{IPC} in line 13. \texttt{IDPC} then removes dummy vertices and edges from $\overline{G}_{k^*}$.

\begin{figure}[!htb]
\centering
\subfigure[Before the subgraph partition.]{\includegraphics[width=0.49\columnwidth]{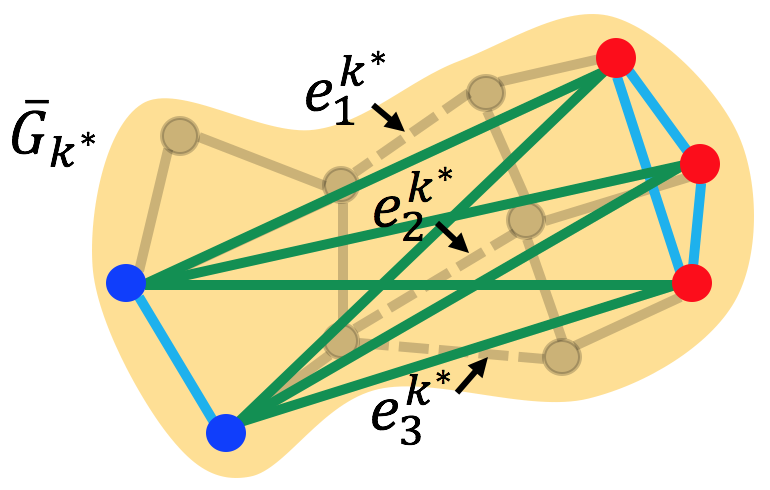}}
\subfigure[After the subgraph partition.]{\includegraphics[width=0.49\columnwidth]{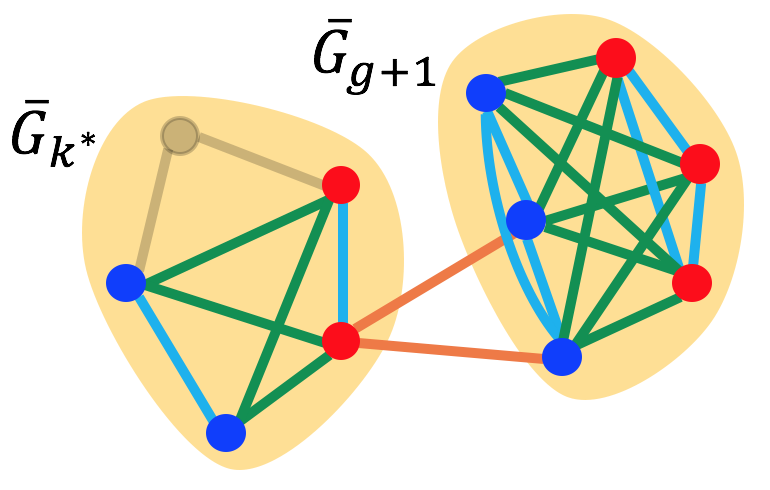}}
\caption{
Depiction of the subgraph partition process. The meaning of colors used for vertices and edges is the same as in Figure~\ref{fig:abstract_graph}. The candidate cut applied in this example is $C_{k^*}=\{e_1^{k^*}, e_2^{k^*}, e_3^{k^*}\}$ (\ie, dotted lines in (a)), where $e_1^{k^*}$ is in-collision while $e_2^{k^*}$ and $e_3^{k^*}$ are collision-free. 
} 
\label{fig:partition}
\end{figure}

In line 14, \texttt{IDPC} splits $\overline{G}_{k^*}$ into $\overline{G}_{k^*}$ and $\overline{G}_{g+1}$, followed by updating  $\widetilde{G}$ accordingly. Figure~\ref{fig:partition} illustrates the subgraph partition process, where a candidate cut found is $C_{k^*}=\{e_1^{k^*}, e_2^{k^*}, e_3^{k^*}\}$. After splitting $\overline{G}_{k^*}$ into two, \texttt{IDPC} applies edge evaluations to $C_{k^*}$ and leaves collision-free edges. To update $\widetilde{G}$, \texttt{IDPC} executes the following three steps. First, \texttt{IDPC} removes the first type of abstract edges (\ie, green edges in Figure~\ref{fig:partition} (a)) and some of the third type of abstract edges (\ie, blue edges in Figure~\ref{fig:partition} (a)) if $C_{k^*}$ splits endpoint vertices into different partitions. Second, \texttt{IDPC} adds new abstract vertices from collision-free edges that exist in $C_{k^*}$. Third, \texttt{IDPC} adds three types of new abstract edges (\ie, green and orange edges and blue edges among newly added abstract vertices in Figure~\ref{fig:partition} (b)). Function values of $\delta$, $\Delta$, and $\tau$ are assigned for the new abstract vertices and edges. 
For $c$, the second type of edges (\ie, orange edges in Figure~\ref{fig:partition}) is initialized with \textsc{true} as their connectivity has already been confirmed, but the other two types are initialized to  \textsc{false}.

Remember that the point of introducing $\widetilde{G}$ is to check for global disconnectivity from a set of cuts discovered from previous iterations (lines 15-17). Since $\widetilde{G}$ is an undirected unweighted graph, \texttt{IDPC} employs \emph{breadth-first search} to $\widetilde{G}$ to check the connectivity between abstract vertices corresponding to a start and a goal. Therefore, the termination condition for cut finding is the detection of disconnectivity in $\widetilde{G}$.

\texttt{IDPC} iterates the while loop until it finds either a path or a cut. Notice that all the computations regarding cut finding are now local. In Appendix~\ref{appen:pseudo_code}, we include the pseudo-code for the remaining functions in Algorithm~\ref{alg:idpc}. In Appendix~\ref{appen:idpc_example}, we show a pictorial example of how \texttt{IDPC} operates on a small toy roadmap.

\subsection{Analysis}
\label{subsec:analysis}

We analyze the completeness guarantee and time complexity of \texttt{IDPC}. The following theorem shows that the abstract graph $\widetilde{G}$ and subgraphs $\{\overline{G}_k\}_{k=1}^g$ still capture all possible paths and cuts by construction to preserve completeness. Therefore, like \texttt{IPC}, \texttt{IDPC} always finds a path if a given problem is feasible or a cut otherwise. 

\begin{theorem}
\label{thm:idpc}
\texttt{IDPC} is complete.
\end{theorem}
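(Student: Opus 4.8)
The plan is to mirror the structure of the proof of Theorem~\ref{thm:ipc}: show that \texttt{IDPC} (i) terminates, (ii) returns a path whenever the query is feasible, and (iii) returns a cut whenever it is infeasible. Item~(ii) is essentially inherited from \texttt{IPC}, since pathfinding in \texttt{IDPC} is still run over the entire graph $\overline{G}$ with the same weights $p_w$: the edges of any ground-truth $v_s$--$v_g$ path are collision-free, so their $p_w$ values are never frozen to $\infty$, Dijkstra therefore always has such a path available, and the argument of Theorem~\ref{thm:ipc} carries over verbatim. The real content is to show that the abstract graph $\widetilde{G}$ never discards connectivity that is present in the ground truth, and then to bound the number of iterations.

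The heart of the proof is an invariant I would establish by induction on the iterations of the while loop: \emph{every ground-truth $v_s$--$v_g$ path in $\overline{G}$ projects, via $\inv\delta$, to a walk in $\widetilde{G}$ from $\inv\delta(v_s)$ to $\inv\delta(v_g)$}. The base case holds because $\widetilde{G}$ initially consists of the single edge joining $\inv\delta(v_s)$ and $\inv\delta(v_g)$ with $c=\textsc{false}$. For the inductive step, the only operations that modify $\widetilde{G}$ are \texttt{ReflectPathEvaluation} (which merely sets some $c$ values to \textsc{true} and freezes $p_w^k,p_c^k$, deleting no abstract edges) and \texttt{SubgraphPartition}. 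In a partition, $\overline{G}_{k^*}$ is cut by $C_{k^*}$ into parts $A$ and $B$; by construction $C_{k^*}$ is exactly the set of $\overline{G}_{k^*}$-edges crossing between $A$ and $B$, so after its collision edges are discarded the only $A$--$B$ connections left are the collision-free edges of $C_{k^*}$, which become new $\overline{C}$-edges and enter $\widetilde{G}$ as (type-2) abstract edges between the newly created abstract vertices. Hence a ground-truth subpath that enters $\overline{G}_{k^*}$ at a boundary vertex $u$ and leaves at a boundary vertex $w$ is still witnessed: if it stays within one part, $u$ and $w$ lie in the same part and the type-1 or type-3 abstract edge between $u$ and $w$ is retained (\texttt{SubgraphPartition} deletes such an edge only when the cut puts its endpoints in different parts); if it crosses between $A$ and $B$, it must traverse a collision-free edge of $C_{k^*}$, so it decomposes into within-part pieces (each a surviving or newly created type-1/type-3 abstract edge among boundary vertices) joined by type-2 $\overline{C}$-edges. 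The type-3 edges (substart--substart and subgoal--subgoal connections) are precisely what covers the case, flagged in the text, where the path leaves a subgraph and later re-enters it; since every exit from or entry into a subgraph uses only $\overline{C}$-edges, whose subgraph-side endpoints are substarts or subgoals, all three transit patterns are handled. Concatenating the per-subgraph witnesses gives the walk. Consequently, whenever the breadth-first search in \texttt{CheckCutExistence} reports no $\inv\delta(v_s)$--$\inv\delta(v_g)$ walk, no ground-truth path exists, so returning $C$ in that branch is correct.

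For termination, note that each non-returning iteration calls \texttt{SubgraphPartition}, which splits a subgraph into two non-empty parts and therefore strictly increases the number of subgraphs $g$; since subgraphs have pairwise-disjoint vertex sets contained in $V$, $g\le|V|$, so at most $|V|-1$ such iterations occur. If \texttt{IDPC} does not return earlier, it reaches the state in which every subgraph is a single vertex; at that point every surviving edge is a confirmed collision-free $\overline{C}$-edge, no type-1 or type-3 abstract edge can exist (a single-vertex subgraph cannot contain two boundary vertices), and hence connectivity of $\widetilde{G}$ between $\inv\delta(v_s)$ and $\inv\delta(v_g)$ coincides exactly with ground-truth connectivity between $v_s$ and $v_g$. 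So \texttt{CheckCutExistence} returns a cut if the query is infeasible and pathfinding returns a path if it is feasible. Either way \texttt{IDPC} executes one of its two \texttt{return} statements, which, together with the soundness established above, proves completeness.

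The step I expect to be the main obstacle is the inductive preservation of the invariant: one must check carefully that the bookkeeping of \texttt{SubgraphPartition} (which type-1 and type-3 edges are deleted, which new abstract vertices and edges are created, and how $\delta,\Delta,\tau,c$ are reassigned) genuinely maintains a witnessing walk for \emph{every} ground-truth path, including pathological ones that revisit a subgraph many times and the treatment of $v_s$ and $v_g$ themselves as (sub)start/(sub)goal vertices. A secondary subtlety is confirming that progress is truly monotone---in particular that \texttt{ChooseSubgraph} always selects a subgraph on which \texttt{ExecuteCutFinding} yields a non-trivial $A$--$B$ separation (for instance in the degenerate case where pathfinding returns an empty candidate path)---so that the $g\le|V|$ bound really caps the number of iterations.
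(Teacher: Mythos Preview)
Your proposal is correct and follows essentially the same route as the paper: both arguments boil down to showing that the abstract graph $\widetilde{G}$ never loses connectivity that is present in the ground truth, so that disconnection in $\widetilde{G}$ certifies a genuine cut, while the path side is inherited from \texttt{IPC}. The paper packages this as Lemma~\ref{lemma:abs_edge} (the three abstract-edge types cover all connectivity) together with the biconditional Lemma~\ref{lemma:cut}, and then checks, somewhat informally, that the three subgraph manipulations (discarding disconnected subgraphs, the \texttt{ChooseSubgraph} heuristic, and the clustering that skips pairs with $c=\textsc{true}$) do not preclude any valid cut. You instead state and maintain an explicit inductive invariant across iterations, tracing how a ground-truth path is witnessed after each \texttt{SubgraphPartition}; this is the same content but more carefully organized, and it makes transparent exactly why the type-3 edges are needed.

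The one place where you genuinely diverge is termination: the paper does not give a separate termination argument for \texttt{IDPC}, implicitly leaning on the \texttt{IPC}-style observation that pathfinding evaluates fresh edges each iteration, whereas you bound the number of iterations by $|V|-1$ via the strict growth of $g$. Your bound is cleaner, but, as you note yourself, it hinges on every call to \texttt{ExecuteCutFinding} producing a non-trivial bipartition of $\overline{G}_{k^*}$; the paper's route sidesteps this check at the cost of a looser (and only implicit) bound.
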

\begin{proof}
The proof is included in Appendix~\ref{appen:proof}.
\end{proof}

To analyze the time complexity of \texttt{IDPC}, we consider four variables: $V$, $E$, $\widetilde{V}$, and $\widetilde{E}$, where $V$ and $E$ are used for $\{\overline{G}_k\}_{k=1}^g$, whereas $\widetilde{V}$ and $\widetilde{E}$ are used for $\widetilde{G}$. Note that in the worst case, where every vertex in $\overline{G}$ forms an individual subgraph, $|V|\approx|\widetilde{V}|$ and $|E|\approx|\widetilde{E}|$. However, this case rarely occurs in practice, and generally, $|V|\gg|\widetilde{V}|$ and $|E|\gg|\widetilde{E}|$.

\begin{table}[ht]
\begin{center}
\begin{tabular}{c|c}
\hline
Components & Complexity \\
\hline
\hline
\texttt{ExecutePathfinding} (\ie, Dijkstra) & $\mathcal{O}(|E|+|V|\log|V|)$ \\
\texttt{ReflectPathEvaluation} & $\mathcal{O}(|\widetilde{V}||E|)$ \\
\texttt{ChooseSubgraph} & $\mathcal{O}(|E|)$ \\
\texttt{ClusterSubstartsAndSubgoals} & $\mathcal{O}(|\widetilde{V}|^2)$ \\
\texttt{ExecuteCutFinding} (\ie, Push--relabel) & $\mathcal{O}(|V|^2\sqrt{|E|})$ \\
\texttt{SubgraphPartition} & $\mathcal{O}(|\widetilde{V}||V|)$ \\
\hline
\end{tabular}
\caption{Complexities of major components in Algorithm~\ref{alg:idpc}.}
\label{table:idpc_complexity}
\end{center}
\end{table}

Table~\ref{table:idpc_complexity} shows the complexity of major computations in Algorithm~\ref{alg:idpc}, which can be derived from the pseudo-code in Appendix~\ref{appen:pseudo_code}. Like in \texttt{IPC}, the cut finding by Push--relabel dominates the overall computation. However, the cut-finding computation now applies to a subgraph whose size is $|V^k|$ and $|E^k|$, not the entire graph $\overline{G}$ as in the case of \texttt{IPC}, thus improving its efficiency. We achieve this improvement by embracing additional computations (as shown in Table~\ref{table:idpc_complexity}). To verify that those computations are relatively inconsequential, in Appendix~\ref{appen:analysis_idpc}, we analyze the  time taken by major components of \texttt{IDPC} as a function of graph size.



\section{Evaluation}
\label{sec:eval}

\begin{figure*}
\centering
\subfigure{\includegraphics[width=0.55\columnwidth]{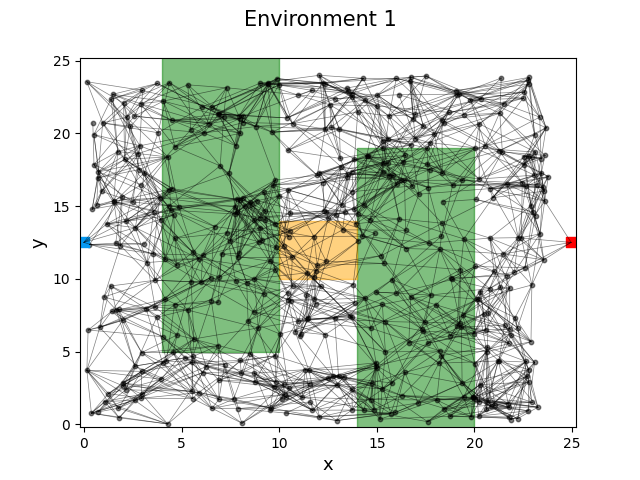}}
\hspace{-7mm}
\subfigure{\includegraphics[width=0.55\columnwidth]{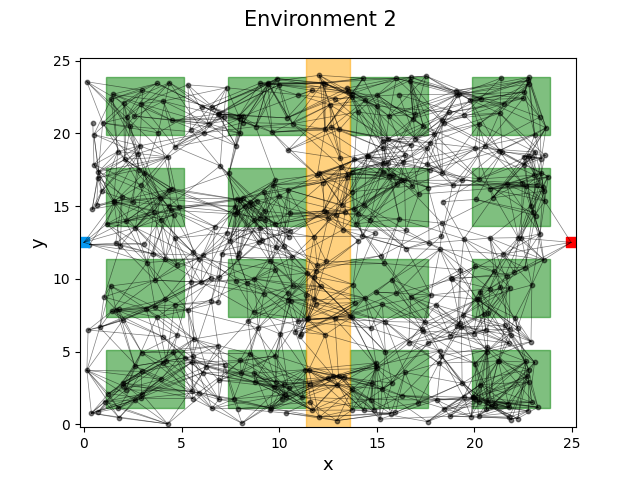}}
\hspace{-7mm}
\subfigure{\includegraphics[width=0.55\columnwidth]{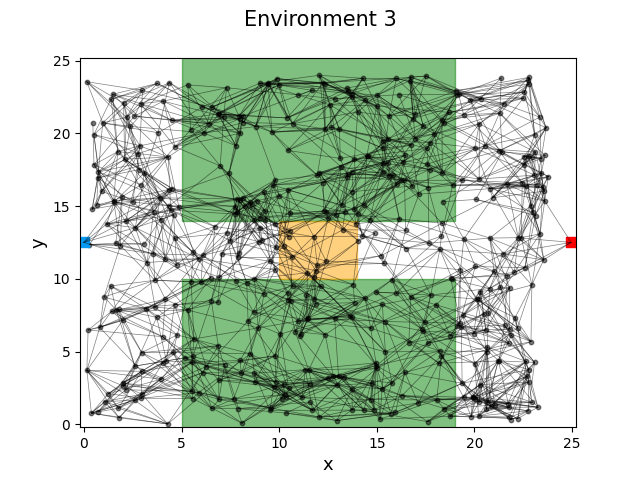}}
\hspace{-7mm}
\subfigure{\includegraphics[width=0.55\columnwidth]{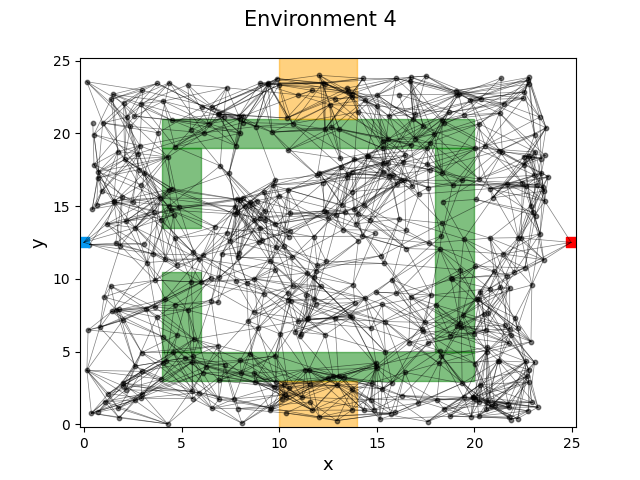}}
\caption{
Depiction of four environments used for comparison. Green and yellow polygons are obstacles. Infeasible and feasible problems are created with and without yellow obstacles, respectively. Blue squares are the start and red squares are the goal. Examples of a PRM roadmap consisting of $500$ vertices and $2000$ edges are shown in black.
} 
\label{fig:comparison_env}
\end{figure*}


\begin{figure*}
\centering
\subfigure{\includegraphics[width=0.60\columnwidth]{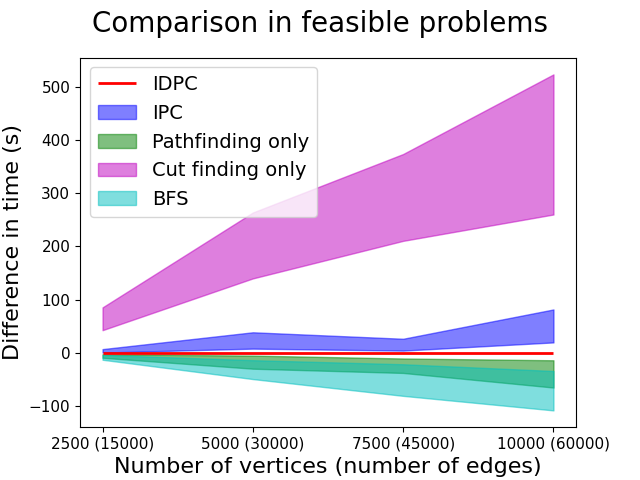}}
\subfigure{\includegraphics[width=0.60\columnwidth]{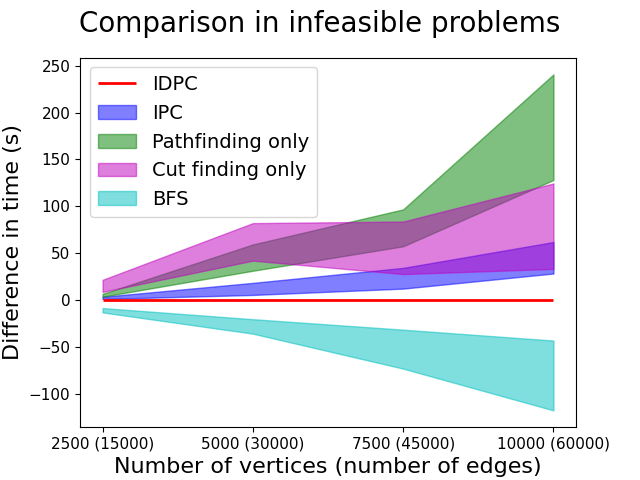}}
\subfigure{\includegraphics[width=0.60\columnwidth]{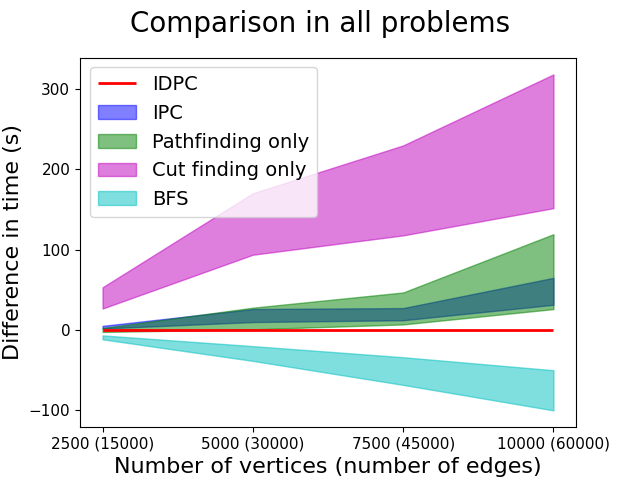}}
\subfigure{\includegraphics[width=0.60\columnwidth]{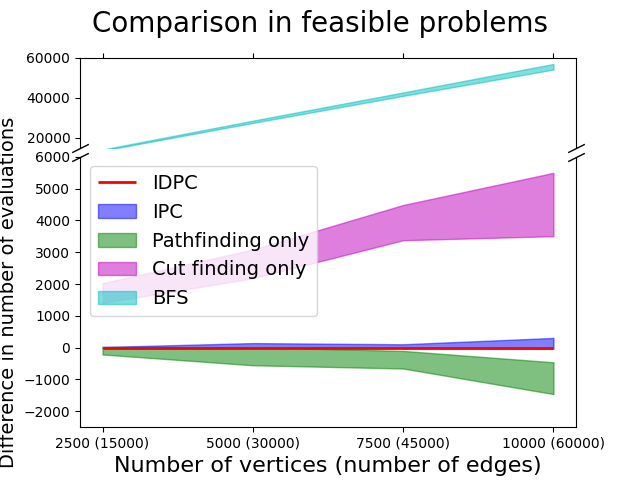}}
\subfigure{\includegraphics[width=0.60\columnwidth]{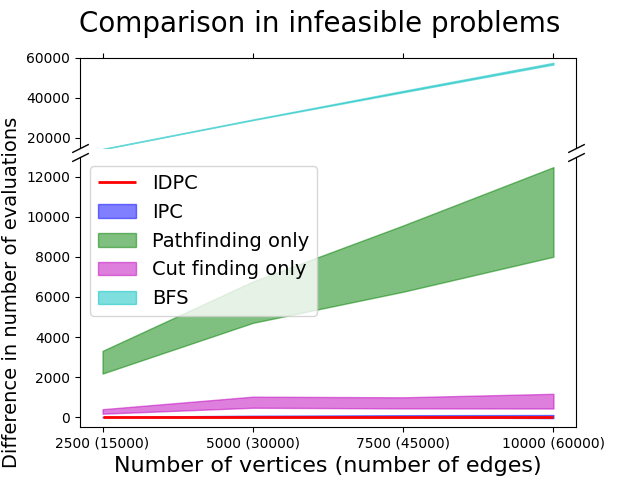}}
\subfigure{\includegraphics[width=0.60\columnwidth]{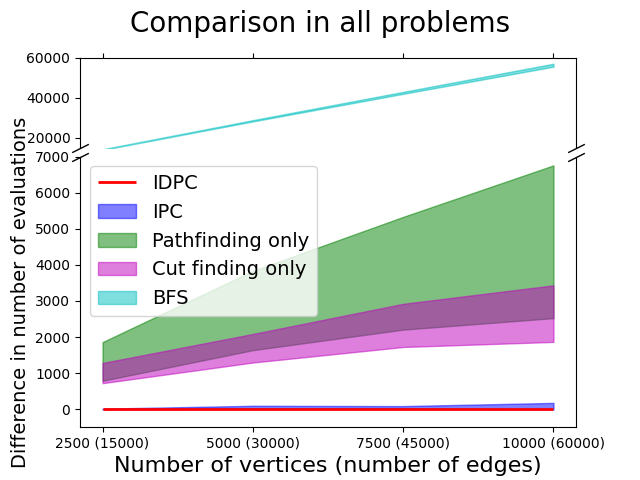}}
\caption{
Plots of the comparison analysis results. The top and bottom rows show the differences in completion time and the differences in the number of edge evaluations, respectively, compared to \texttt{IDPC}. The first, second, and third columns show feasible, infeasible, and a mixture of feasible and infeasible problem cases.
} 
\label{fig:comparison_result}
\end{figure*}

In this section, we validate our methods by the following set of evaluations. First, we show the comparison analysis against baselines in two-dimensional C space in terms of the completion time and the number of edge evaluations. Second, we conduct more realistic simulations where a C-space is high-dimensional. We report additional evaluations in the appendix, including an analysis of the computational overhead of \texttt{IDPC} compared to \texttt{IPC} (Appendix~\ref{appen:analysis_idpc}), how underlying graph topologies affect performance (Appendix~\ref{appen:graph_topology}), and the performance evaluation when increasing the number of pathfinding executions at each iteration (Appendix~\ref{appen:more_pathfindings}).

All experiments are conducted on an Intel Core i7-8665U CPU at 1.90 GHz with 16 GB of RAM. We adopt Dijkstra’s and the Push--relabel algorithms implemented in NetworkX~\cite{hagberg2008exploring}. All plots in this section show the mean and $95\%$ confidence interval obtained from multiple runs.

\subsection{Comparison with baselines}
\label{subsec:comparison}

We evaluate algorithms in terms of the number of edge evaluations and the time taken to detect feasibility. We do not include the edge evaluation time in the total computation time because it may differ depending on the local planning method, collision-checking algorithm used, robot shape, and how much approximation of the mesh shape is considered.

We compare against three baselines: (1) applying pathfinding only (\ie, Dijkstra's), (2) applying cut finding only (\ie, Push--relabel), and (3) the breadth-first search (BFS) based method. The first two baselines are used to show the consequence of neglecting to consider either the feasibility or infeasibility of a given problem. In particular, most existing methods in the literature that rely on the roadmap~\cite{choudhury2016pareto,narayanan2017heuristic,hou2020posterior} are represented by the first baseline since infeasibility is often overlooked. Other learning-based methods that are not based on the roadmap (referred to in Section~\ref{sec:related}) are omitted, as they do not fit into the proposed learning framework in Section~\ref{sec:intro} and typically do not provide infeasibility proofs.

BFS can serves as another baseline as it is guaranteed to visit all edges in a graph. However, because BFS does not reason about global disconnectivity as it searches in an edge space, and a roadmap is generally not a tree graph but contains many cycles, we need to modify BFS so that it can detect infeasibility. We create an additional graph containing only collision-free edges, incrementally constructed as BFS performs edge evaluations. After BFS (\ie, outer loop) reaches a goal, at every iteration of BFS, we apply another BFS (\ie, inner loop) to this new graph to determine whether a path exists from a start to a goal. If not, the outer-loop BFS continues, and we apply the same procedure. Infeasibility is declared if a path is not found after the outer-loop BFS search is exhausted.

Figure~\ref{fig:comparison_env} shows four domains we use for comparison. All domains are in two-dimensional C-space and yield both feasible and infeasible problems (\ie, without and with orange obstacles). We use PRM to generate a roadmap $\overline{G}$, but we also show the results of other roadmap types in Appendix~\ref{appen:graph_topology}.  We generate ten problems for feasible and infeasible scenarios, respectively, in each domain: ten roadmaps using different random seeds. For the values of $p$ in $\overline{G}$, we add random noise to the ground-truth values so that $p\sim U(0.3, 0.4)$ for collision edges and $p\sim U(0.6, 0.7)$ for collision-free edges, where $U$ represents a uniform distribution.

Figure~\ref{fig:comparison_result} shows the difference in the performance change compared to \texttt{IDPC} as the graph size increases. We set the performance of \texttt{IDPC} as a standard performance and gather statistics of the differences between all algorithms (\ie, \texttt{IPC} and baselines) and \texttt{IDPC}. In the plots of Figure~\ref{fig:comparison_result}, the results above the red line (\ie, \texttt{IDPC}'s performance) are worse than \texttt{IDPC}, whereas the results below the red line are better. Specifically, the results that do not overlap with each other (as well as with the red line) can be considered \emph{statistically significantly different}. We say that one method outperforms another when this is the case.

It can be seen that the pathfinding-only baseline outperforms both \texttt{IPC} and \texttt{IDPC} for feasible problems but performs worse when infeasible problems exist. The cut-finding-only baseline consistently performs worse than \texttt{IPC} and \texttt{IDPC} in all cases due to its heavy cut-finding computations, although its performance improves for infeasible problems. The BFS-based baseline has the shortest completion time but the largest number of edge evaluations among all methods. Like pathfinding only, this baseline exhibits weakness for infeasible problems; in the worst case, even if the neighboring edges of the goal comprise a cut, the BFS-based baseline still evaluates all edges.

\texttt{IDPC} outperforms \texttt{IPC} in terms of completion time in all cases although both require a similar number of edge evaluations. This result validates our motivation for proposing \texttt{IDPC}. Empirically, the number of evaluations required for \texttt{IDPC} is smaller than that for \texttt{IPC} marginally but not statistically significant. \texttt{IPC} still performs reasonably well compared to the baselines; although it overlaps with the pathfinding-only baseline regarding completion time, it outperforms pathfinding only by far in terms of the number of evaluations.

We conduct an additional evaluation where a different \emph{calibration} level of a prior roadmap is given. That is, we change how close the $p$ values in the prior roadmap are to the ground-truth values. In Figure~\ref{fig:calibration}, we show the results for three calibration levels of a roadmap when a mixture of feasible and infeasible problems is given: (1) \emph{perfect prior}, that is, $p=1$ for collision-free edges, and $p=0$ for collision edges; (2) \emph{noisy prior}, the same $p$ values used in Figure~\ref{fig:comparison_result}; and (3) \emph{no prior}, that is, all $p$ values are $0.5$.

In Figure~\ref{fig:calibration}, we observe that 
our methods degrade as a prior roadmap becomes noisier; a noisy prior incurs many unnecessary cut findings, increasing completion time. Alternatively, our methods clearly perform the best with a perfect prior. We further observe that \texttt{IDPC} shows strong \emph{robustness} to noise compared to \texttt{IPC}. We conjecture that the decomposition of a search space in \texttt{IDPC} helps avoid searching in unnecessary regions of the space, which is a particular strength of \texttt{IDPC}.

\begin{figure}[H]
\centering
\subfigure{\includegraphics[width=0.51\columnwidth]{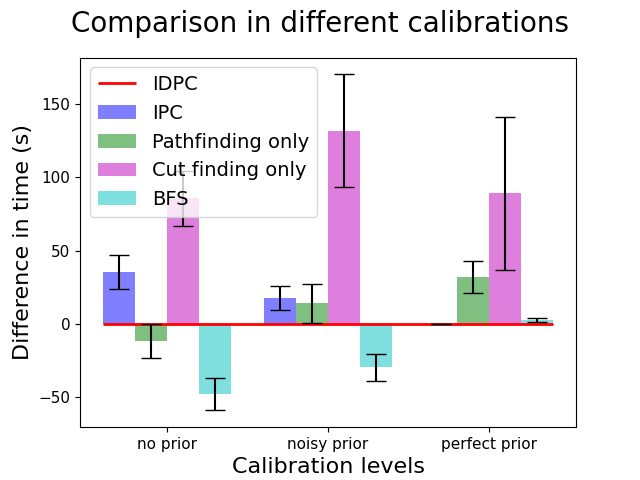}}
\hspace{-5mm}
\subfigure{\includegraphics[width=0.51\columnwidth]{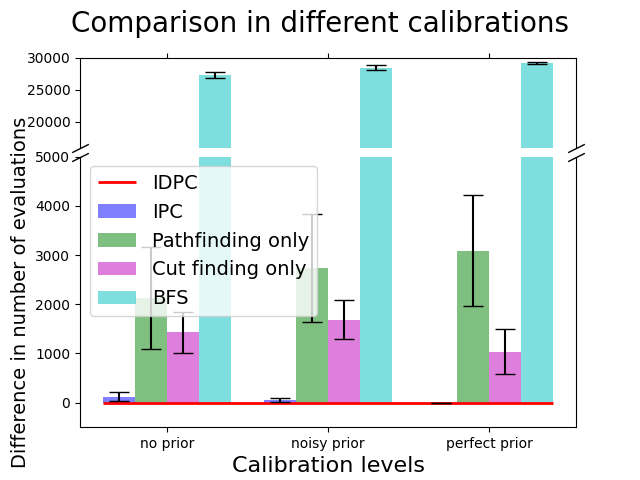}}
\caption{
Plots comparing the results for three calibration levels of a prior roadmap. The tip of the bar graphs represents the mean, and the error bars represent $95\%$ confidence intervals.
} 
\label{fig:calibration}
\end{figure}

\subsection{Performance on higher-dimensional C-space}
\label{subsec:high_cspace}

To this point, all experiments have been conducted in two-dimensional C-spaces for ease of visualization. Here, we investigate more realistic scenarios where a C-space is high-dimensional: the navigation task in Figure~\ref{fig:navigation}, having a $3$-dimensional C-space, and the manipulation task in Figure~\ref{fig:manipulation}, having a $7$-dimensional C-space. We obtain a prior roadmap as follows. We inject random Gaussian noise into the location of obstacles to generate multiple problem instances. We run PRM to get a roadmap and compute the edge-existence probability for all edges from the generated problem instances. We then selectively create ten feasible and ten infeasible new problems as query problems. 

\begin{figure}[!htb]
\centering
\includegraphics[width=0.30\textwidth]{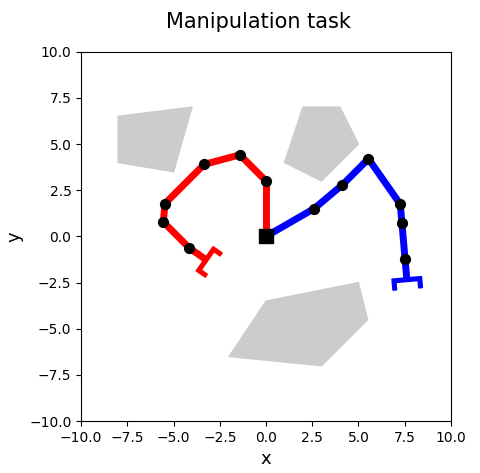}
\caption{Description of the manipulation task. The blue and red manipulators represent the start and the goal, respectively. The gray polygons are obstacles. 
}
\label{fig:manipulation}
\end{figure}

\begin{figure}[!htb]
\centering
\subfigure{\includegraphics[width=0.51\columnwidth]{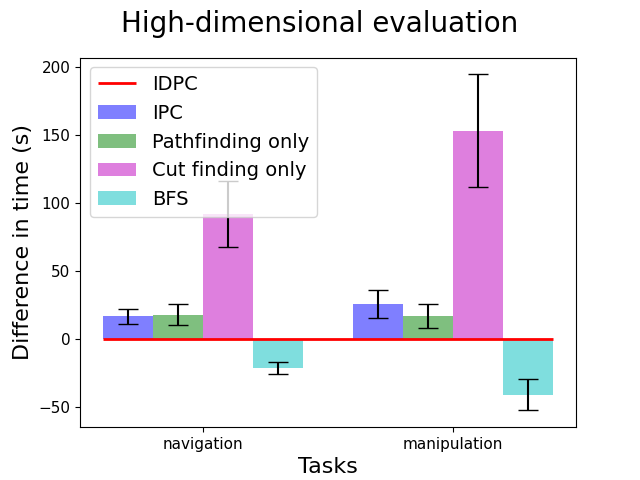}}
\hspace{-5mm}
\subfigure{\includegraphics[width=0.51\columnwidth]{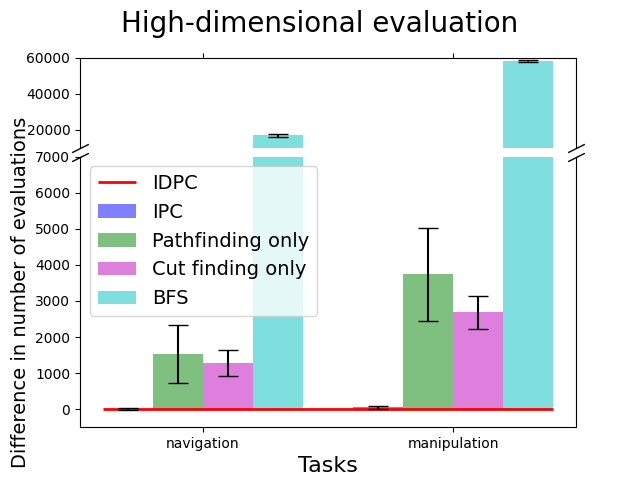}}
\caption{
Plots comparing the results for navigation and manipulation tasks. For the navigation task, PRM generates a roadmap consisting of $5000$ vertices with $20000$ edges. The roadmap includes $10000$ vertices with $60000$ edges for the manipulation task.
} 
\label{fig:high_dimension}
\end{figure}

Figure~\ref{fig:high_dimension} shows the results for both navigation and manipulation tasks. A similar trend in Section~\ref{subsec:comparison} can be seen here (see the third column of Figure~\ref{fig:comparison_result}). As previously, \texttt{IDPC} performs the best in all cases other than the BFS-based baseline's completion time. As explained in Section~\ref{subsec:ipc}, this result indicates that our algorithms' performance is robust to the dimensionality of the C-space and is only affected by the structure and size of the prior roadmap and the ground-truth existence of its edges. 







\section{Related Work}
\label{sec:related}

\textbf{Infeasibility detection in motion planning}: Sampling-based planners are \emph{semi-decidable}, meaning that they will eventually find a feasible path when one exists but does not know how to terminate when no path exists~\cite{yap2013soft}. Several methods have been proposed in the literature to deal with this issue, which can be classified into three approaches: (1) direct infeasibility detection, (2) designing a stopping strategy, and (3) learning to predict infeasibility. 

The approach of direct infeasibility detection focuses on detecting whether a goal is disconnected from a start rather than finding a path. The work~\cite{basch2001disconnection,bretl2004multi,li2020towards} proposes disconnection proofs and applies their method to a query problem to check whether their disconnection proof holds; if it holds, the problem is guaranteed infeasible. The drawback is that the cost of computing their proofs is prohibitive as complex optimizations are involved.
The work~\cite{zhang2008efficient,mccarthy2012proving,varava2021free} also addresses proving path non-existence by introducing approximation to a C-space, such as cell decomposition~\cite{zhang2008efficient,mccarthy2012proving} and a finite set of slices~\cite{varava2021free}.

The stopping strategy aims at detecting infeasibility while finding a feasible path. Instead of terminating a planner when exceeding a predetermined time budget, this approach adds computation to the main pathfinding algorithm to actively stop when a particular criterion is met. The work~\cite{hadfield2016sequential} studies optimization-based motion planning, proposing to evaluate the improvement in the objective value to terminate the planner.
The sparse roadmap~\cite{simeon2000visibility,dobson2014sparse,coleman2015experience,orthey2021sparse}, which we use as a baseline in Appendix~\ref{appen:graph_topology}, is guaranteed to terminate for infeasible problems and outputs probabilistic infeasibility proofs.

Learning-based methods assume access to past experience to learn feasibility classifiers, owing to machine learning techniques.
The work~\cite{hauser2005learning,wells2019learning} uses support vector machines (SVM) to learn a feasibility classifier for multi-step planning, such as task and motion planning.
The work~\cite{li2023sampling} learns a separating manifold between a start and a goal using radial basis function kernel SVM. 
To deal with image inputs, the work~\cite{driess2020deep,bouhsain2023learning,xu2022accelerating,park2022scalable} designs convolutional neural network-based feasibility classifiers.

Our work can be seen as a mixture of all three approaches in the sense that cut finding aims at direct infeasibility detection, that our methods stop earlier than pathfinding only or cut finding only, and that a prior roadmap is learned from past experience.

\textbf{Efficient collision checking}: Edge evaluations that require many collision checks as a subroutine are empirically known to consume the most computation in motion planning~\cite{choudhury2016pareto,narayanan2017heuristic,hou2020posterior,bhardwaj2021leveraging}. Thus, efficient collision-checking strategies have been proposed in previous studies. 

The work~\cite{bialkowski2013free,bialkowski2016efficient} proposes guiding the sampling distribution at each iteration using information learned from previous iterations to reduce collision checks. The work~\cite{murray2016robot} designs collision-detection circuits that can run three orders of magnitude faster than existing algorithms by exploiting parallelism. The work~\cite{pan2016fast} develops efficient hashing techniques to predict the collision probability of a query sample using stored collision results from previous queries. The work~\cite{leven2002framework,bekris2007greedy,otte2016rrtx} investigates efficient collision checking in scenarios with dynamically changing environments.

In motion planning, the notion of \emph{laziness}~\cite{bohlin2000path,hauser2015lazy,dellin2016unifying,haghtalab2018provable} is introduced to defer expensive collision-checking procedures and apply them only when necessary. There exists a line of research on lazy planning in the literature, such as Lazy PRM~\cite{bohlin2000path}, Lazy PRM*~\cite{hauser2015lazy}, and Lazy SP~\cite{dellin2016unifying}. 

Recently, \emph{neural} approaches for collision detection methods~\cite{das2020learning,danielczuk2021object,kew2021neural,zhi2022diffco} or edge evaluation~\cite{yu2021reducing} have been proposed leveraging the rapid advances of deep learning. These methods require collecting data and training their model, but they have been shown to perform query problems more efficiently than methods without learning. 

Our work also addresses how to obtain efficient collision checking but on a specific data structure, a probabilistic prior roadmap.

\textbf{Learning framework using a prior roadmap}: We are not the first to explore a probabilistic prior roadmap in motion planning. 
Fuzzy PRM~\cite{nielsen2000two} first proposes incorporating edge-existence probability on the edges of a roadmap. 
The work~\cite{choudhury2016pareto,narayanan2017heuristic,hou2020posterior} also employs a prior roadmap, whose objective is to find the shortest path with the minimum possible edge evaluations. POMP~\cite{choudhury2016pareto} solves a Pareto-optimality problem that trades-off between path length and collision measure. AEE*~\cite{narayanan2017heuristic} is an anytime algorithm formulated in the Markov decision process, aiming for optimal path length in expectation. PSMP~\cite{hou2020posterior} formulates a regret minimization problem in Bayesian reinforcement learning.
However, the possibility of infeasibility is ignored in those studies, which is the main motivation for this paper.

\section{Conclusion}
\label{sec:conclusion}

In this work, we address the problem of efficiently determining whether a query problem on a probabilistic prior roadmap is feasible. To this end, we propose two complete algorithms (\ie, \texttt{IPC} and \texttt{IDPC}) that are guaranteed to find either a path or a cut, outperforming the baseline methods.

We observe that the performance of our methods is upper limited by the choice of a cut-finding algorithm. Although Push--relabel is the most efficient existing minimum cut algorithm, it is still expensive for large roadmaps since it only searches for the minimum cut. Alternatively, instead of the minimum cut, a suboptimal cut may suffice for our applications; it may be easier to compute but still perform well in practice because we deal with a probabilistic prior roadmap. There are some approximation schemes to cut finding in the literature~\cite{leighton1989approximate,leighton1999multicommodity}. We leave investigating the applicability of our ideas to those approximation algorithms to improve efficiency further as future work.

Also, incorporating our methods in multi-step planning, such as task and motion planning, would be another promising direction, especially in cases where checking for infeasibility is a crucial bottleneck.

\section*{Acknowledgments}
This work has taken place in the Learning Agents Research Group (LARG) at UT Austin.  LARG research is supported in part by NSF (FAIN-2019844), ONR (N00014-18-2243), ARO (W911NF-19-2-0333), DARPA, Bosch, and UT Austin's Good Systems grand challenge. Peter Stone serves as the Executive Director of Sony AI America and receives financial compensation for this work. The terms of this arrangement have been reviewed and approved by the University of Texas at Austin in accordance with its policy on objectivity in research.

\bibliographystyle{ieeetr}
\bibliography{fd_refs}

\clearpage
\appendix
\subsection{Proof of Theorem~\ref{thm:idpc}}
\label{appen:proof}

We first present the following lemma, which will be used subsequently.

\begin{lemma}
\label{lemma:abs_edge}
Abstract edges in $\widetilde{G}$ consider all possible connectivity in $\overline{G}$.
\end{lemma}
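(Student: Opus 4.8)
\textbf{Proof plan for Lemma~\ref{lemma:abs_edge}.}

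The plan is to argue that the three types of abstract edges maintained in $\widetilde{G}$ collectively account for every way that a ground-truth path in $\overline{G}$ could traverse from $v_s$ to $v_g$ across the current partition $\{\overline{G}_k\}_{k=1}^g$, so that declaring disconnectivity in $\widetilde{G}$ (via BFS) is equivalent to declaring disconnectivity in $\overline{G}$ restricted to the as-yet-unevaluated structure. First I would fix an arbitrary (hypothetical) collision-free path $P^\star$ in $\overline{G}$ from $v_s$ to $v_g$ and decompose it into maximal subpaths, each lying entirely within a single subgraph $\overline{G}_k$, joined by single edges belonging to some candidate cut $C$. By construction of \texttt{SubgraphPartition}, only collision-free cut edges are retained as $\overline{C}$; since $P^\star$ is collision-free, each of its inter-subgraph transition edges is in $\overline{C}$ and hence corresponds to a type-(2) (orange/confirmed) abstract edge. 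I then need to show each maximal within-subgraph subpath of $P^\star$ is ``witnessed'' by a chain of abstract edges in $\widetilde{G}$: such a subpath enters $\overline{G}_k$ at an endpoint vertex of some $\overline{C}$ edge and leaves at another, and these endpoint vertices are exactly the substarts/subgoals, i.e.\ the abstract vertices with $\Delta(\widetilde{v})=k$. Depending on the $\tau$-types of the entry and exit vertices, this subpath is covered either by a type-(1) abstract edge (substart-to-subgoal) or a type-(3) abstract edge (substart-to-substart or subgoal-to-subgoal) --- this last case being precisely why the third edge type was introduced, handling a path that re-enters the same subgraph.

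The second half of the argument is the invariant-maintenance part: I must show that an abstract edge $\widetilde{e}$ is removed (i.e.\ not maintained) only when the corresponding connection has been \emph{proven} not to exist. An abstract edge is deleted in one of two situations: (i) a local cut-finding execution in $\overline{G}_{k^*}$ certifies that its two endpoint vertices are separated within $\overline{G}_{k^*}$ --- but cut finding reveals ground-truth edge existence, so this is a genuine disconnection and no collision-free subpath between those endpoints can exist; or (ii) the subgraph partition splits the endpoints into different sides of a \emph{confirmed} cut $C_{k^*}$, in which case again no within-subgraph path survives. In neither case do we discard a connection that a real collision-free path could have used. Conversely, newly created abstract edges after a partition (type-(1) and type-(3) within each new subgraph, type-(2) across the confirmed collision-free cut edges) re-establish witnesses for all still-possible connectivity in the two children, so the invariant is preserved inductively across iterations, with the base case being the initial single edge $\widetilde{e}_1$ joining $\inv\delta(v_s)$ and $\inv\delta(v_g)$ when $\overline{G}_1=\overline{G}$ (which trivially witnesses any path). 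Combining the two halves: every hypothetical path $P^\star$ maps to a walk in $\widetilde{G}$ from $\inv\delta(v_s)$ to $\inv\delta(v_g)$, and conversely $\widetilde{G}$ never asserts a disconnection that is not backed by ground-truth-confirmed cuts; hence abstract edges in $\widetilde{G}$ consider all possible connectivity in $\overline{G}$.

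The main obstacle I anticipate is rigorously handling the type-(3) edges and, more generally, paths that visit a subgraph more than once: I need to be careful that a within-subgraph subpath between two substarts (or two subgoals) is always represented, and that when such a subpath is later shown disconnected by a local cut, the correct abstract edge (and only that one) is removed. A related subtlety is bookkeeping the $c(\cdot)$ flags: edges with $c(\widetilde{e})=\textsc{true}$ are never candidates for removal by cut finding (the dummy vertices are deliberately not connected to such pairs in \texttt{ClusterSubstartsAndSubgoals}), so I must confirm that this exemption is sound --- which it is, because $c(\widetilde{e})=\textsc{true}$ records an already-evaluated collision-free subpath, a connection that genuinely exists and must be preserved. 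Modulo these case distinctions, the proof is a structural induction on the number of while-loop iterations of \texttt{IDPC}, with the inductive hypothesis being exactly the statement of the lemma together with the soundness of the $c$ and $\tau$ annotations.
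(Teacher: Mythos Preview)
Your proposal is correct and rests on the same core idea as the paper---namely, that the three abstract-edge types (substart--subgoal within a subgraph, confirmed $\overline{C}$ edges between subgraphs, and substart--substart / subgoal--subgoal within a subgraph) together witness every way a path in $\overline{G}$ can thread through the current partition. The paper's own proof of this lemma is extremely brief: it simply enumerates the three types and asserts that they jointly cover all paths within each subgraph and hence all connectivity in $\overline{G}$, without the explicit path-decomposition or the inductive invariant-maintenance argument you outline.

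Where you differ is in scope and rigor. Your decomposition of an arbitrary $P^\star$ into maximal within-subgraph segments, your case analysis on the $\tau$-types of entry/exit vertices, and especially your second half---showing that abstract edges are removed only when the corresponding connection is genuinely refuted, and that \texttt{SubgraphPartition} re-establishes witnesses in the children---go well beyond what the paper proves in Lemma~\ref{lemma:abs_edge}. In fact, much of that material (soundness of edge removal, handling of the $c(\cdot)$ exemption, correctness across iterations) is what the paper defers to the proof of Lemma~\ref{lemma:cut}. So your plan effectively merges the paper's two lemmas into a single structural induction; this buys you a self-contained, more rigorous argument, at the cost of doing more work than Lemma~\ref{lemma:abs_edge} alone requires. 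One minor terminological slip: in your case~(ii) you call $C_{k^*}$ a ``confirmed cut,'' but it is only a \emph{candidate} cut whose edges are then evaluated; you already handle this correctly in the next sentence (new type-(2) edges across the collision-free members of $C_{k^*}$ restore the witnesses), so just tighten the wording.
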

\begin{proof}
Three types of abstract edges in $\widetilde{G}$ consider all possible paths in each subgraph, thereby, all possible connectivity in $\overline{G}$. The first type (\ie, green edges in Figure~\ref{fig:abstract_graph}) covers all possible paths between different neighboring subgraphs. The second type (\ie, orange edges in Figure~\ref{fig:abstract_graph}) covers all collision-free edges connected to neighboring subgraphs. The third type (\ie, blue edges in Figure~\ref{fig:abstract_graph}) covers all possible paths from and to the same neighboring subgraph. Since $\overline{G}$ consists of a collection of subgraphs, $\widetilde{E}$ of $\widetilde{G}$ completely covers all possible connectivity in $\overline{G}$.
\end{proof}

For \texttt{IDPC} to be complete, it must output a path if a given problem is feasible or a cut otherwise. As \texttt{IDPC} searches for a path from the entire roadmap graph $\overline{G}$, completeness for path existence follows \texttt{IPC}. Thus, we must show that introducing an abstract graph $\widetilde{G}$ and subgraphs $\{\overline{G}_k\}_{k=1}^g$ does not violate completeness for cut existence. Since \texttt{IDPC} determines infeasibility by checking disconnectivity from the start to the goal in $\widetilde{G}$, the following lemma proves the cut existence part for Theorem~\ref{thm:idpc}.

\begin{lemma}
\label{lemma:cut}
A cut exists in $\overline{G}$ if and only if $\widetilde{G}$ is disconnected from the start to the goal.
\end{lemma}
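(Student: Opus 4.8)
The plan is to prove the biconditional by relating connectivity in $\widetilde{G}$ to the existence of a $v_s$--$v_g$ walk in $\overline{G}$ that uses no confirmed-collision edge, and then appealing to the elementary fact that a set of collision edges separates $v_s$ from $v_g$ precisely when no path avoiding them exists. Concretely, I would fix $\widetilde{v}_s=\inv\delta(v_s)$ and $\widetilde{v}_g=\inv\delta(v_g)$ and reduce the lemma to the claim that $\widetilde{G}$ connects $\widetilde{v}_s$ to $\widetilde{v}_g$ if and only if $\overline{G}$ admits such a $v_s$--$v_g$ path. I would then treat this equivalence as a loop invariant of \texttt{IDPC} and prove it by induction on the number of while-loop iterations, the base case being the two-vertex single-edge $\widetilde{G}$ produced by \texttt{InitializeAbstractGraph}, for which the claim is immediate.

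For the "lifting" direction I would argue the contrapositive: given any $v_s$--$v_g$ path $\pi$ in $\overline{G}$ all of whose edges are currently collision-free, decompose $\pi$ according to the current partition $\{\overline{G}_k\}_{k=1}^{g}$. The path alternates between maximal subpaths lying inside a single $\overline{G}_k$ and single edges crossing between subgraphs; each crossing edge must belong to $\overline{C}$, since collision connecting edges were discarded, and each in-subgraph subpath joins a substart or subgoal of $\overline{G}_k$ to a substart or subgoal of $\overline{G}_k$. By Lemma~\ref{lemma:abs_edge} every piece corresponds to an abstract edge of $\widetilde{G}$ --- crossing edges to type-2 (orange) edges, substart-to-subgoal subpaths to type-1 (green) edges, and subpaths that enter and leave the same subgraph to type-3 (blue) edges --- and none of these abstract edges can have been deleted, because deletion only happens when cut finding has certified the corresponding connection impossible, contradicting the existence of the subpath. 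Concatenating yields a $\widetilde{v}_s$--$\widetilde{v}_g$ walk, so $\widetilde{G}$ is connected.

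For the converse I would lift a $\widetilde{v}_s$--$\widetilde{v}_g$ walk in $\widetilde{G}$ back to $\overline{G}$: a type-2 abstract edge already realizes a concrete collision-free edge of $\overline{C}$, and a type-1 or type-3 abstract edge $\widetilde{e}$ with $c(\widetilde{e})$ equal to \textsc{true} realizes a concrete collision-free subpath discovered by \texttt{ReflectPathEvaluation}; splicing these gives a collision-free $v_s$--$v_g$ path, hence no cut. The only gap is abstract edges with $c(\widetilde{e})$ still \textsc{false}: these are candidate connections not yet ruled out. I would close this using finiteness --- \texttt{IDPC} keeps selecting subgraphs and running cut finding until $\widetilde{G}$ disconnects or a path is returned, and since $\overline{G}$ has finitely many edges, every candidate abstract edge is eventually either deleted or upgraded to $c(\widetilde{e})=$ \textsc{true}; at that terminal state the invariant reads exactly as the lemma. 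In effect, I would state the lemma as the loop invariant and let the termination argument of Theorem~\ref{thm:idpc} supply the finiteness.

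I expect the main obstacle to be the inductive step showing that each iteration's updates --- the edge-evaluation bookkeeping on $\{\overline{G}_k\}_{k=1}^{g}$, the dummy-vertex clustering, the call to \texttt{SubgraphPartition}, and the three-step rewrite of $\widetilde{G}$ --- preserve the invariant, and in particular that the third type of abstract edge is created and retained in exactly the situations where a feasible path could re-enter a subgraph (the case the text explicitly flags as necessary for completeness). Verifying that \texttt{SubgraphPartition} deletes a green or blue abstract edge only when the applied candidate cut $C_{k^*}$ genuinely separates its endpoints within $\overline{G}_{k^*}$, and that it re-introduces all type-1, type-2, and type-3 edges among the newly created substarts and subgoals, is the delicate part; the remainder is routine concatenation and splicing of paths together with the connectivity/cut duality used for the reduction.
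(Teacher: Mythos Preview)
Your approach is sound and structurally parallel to the paper's, but the two diverge in how the $(\Rightarrow)$ direction is handled. Both establish $(\Leftarrow)$ as a soundness property (disconnection in $\widetilde{G}$ implies a genuine cut): the paper argues by contradiction through Lemma~\ref{lemma:abs_edge}, while you give the more constructive contrapositive, decomposing a collision-free path along the subgraph partition and mapping each segment to an undeleted abstract edge. Your decomposition makes the role of the type-3 (blue) edges explicit, which the paper only gestures at.

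For $(\Rightarrow)$, the paper does not set up a loop invariant plus terminal-state argument. Instead it verifies directly that none of the three cut-side manipulations in \texttt{IDPC}---ignoring fully disconnected subgraphs, the heuristic choice of $\overline{G}_{k^*}$, and the exclusion of $c=\textsc{true}$ pairs from the dummy clustering---can preclude a valid cut. That is, the paper checks operation-by-operation that every existing cut remains discoverable, whereas you argue that progress eventually resolves every candidate abstract edge and then read the lemma off at that terminal state. Both routes work; yours is more systematic but shifts the burden onto the inductive-step verification you correctly flag as delicate, while the paper's is shorter but asks the reader to accept that those three operations are the only ones that could go wrong. One caution: your appeal to ``the termination argument of Theorem~\ref{thm:idpc}'' is circular as phrased, since Lemma~\ref{lemma:cut} \emph{is} the cut-existence half of that theorem's proof. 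You should instead invoke the raw finiteness of $E$ directly, together with the \texttt{IPC}-style observation that each non-terminating iteration evaluates at least one fresh edge.
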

\begin{proof}
$(\Leftarrow)$ Let's assume that a path exists in $\overline{G}$ if $\widetilde{G}$ is disconnected from the start to the goal. This statement with Lemma~\ref{lemma:abs_edge} implies that edge evaluations from cut findings identify some edges as non-existing, although they are collision-free. Since the output of the edge-evaluation process is always correct, this statement leads to a contradiction. Thus, the sufficient statement (\ie, $\Leftarrow$) holds. 

$(\Rightarrow)$ The necessary statement involves more because we must show that $\widetilde{G}$ allows to evaluate all possible cuts in $\overline{G}$ before termination if a cut exists. To show this, we present the correctness of \texttt{IDPC} (\ie, not precluding any possible cuts). Three subgraph manipulations are related to cut finding in \texttt{IDPC}, so we show their correctness individually.

First, a subgraph whose substarts and subgoals are disconnected from neighboring subgraphs is not considered for further cut finding. This does not affect the correctness because a path going through this subgraph cannot exist; we can safely remove this type of subgraphs from consideration.

Second, \texttt{IDPC} chooses a particular subgraph out of candidate subgraphs for cut finding if the number of consecutive collision edges is the largest. This choice covers all candidate subgraphs in the worst case because pathfinding precludes each candidate sequentially.

Third, our clustering strategy for cut finding precludes a pair of substart and subgoal whose $c$ value is \textsc{true} (\ie, a collision-free path between substart and subgoal is found) from dummy vertices. Once a pair of substart and subgoal is found to have a collision-free path, further cut-finding executions cannot split this pair into different subgraphs. This invariance allows the removal of those connected pairs safely. Further pathfinding executions will identify paths in a subgraph, and the corresponding pairs will not be considered. Thus, while respecting Lemma~\ref{lemma:abs_edge}, the clustering method always considers all possible valid cuts in a subgraph.

\end{proof}

\subsection{Analysis on \texttt{IDPC}}
\label{appen:analysis_idpc}

In Appendices~\ref{appen:analysis_idpc},~\ref{appen:graph_topology}, and~\ref{appen:more_pathfindings}, we use the setting of $5000$ vertices with $30000$ edges and the same calibration level used in the leftmost figure in Figure~\ref{fig:comparison_env} as a prior roadmap. We also generate the same number of feasible and infeasible problems.

As additional components are introduced in \texttt{IDPC}, we measure the time each takes for a single iteration of Algorithm~\ref{alg:idpc} and empirically evaluate the complexities shown in Table~\ref{table:idpc_complexity}. 

\begin{figure}[!htb]
\centering
\includegraphics[width=0.45\textwidth]{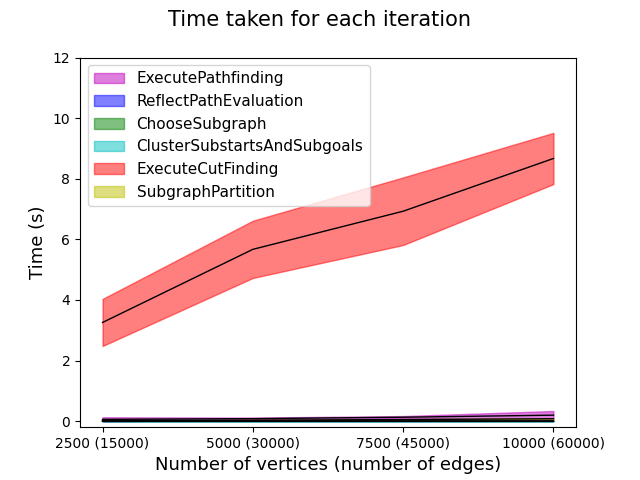}
\caption{A time comparison among major components in Table~\ref{table:idpc_complexity}.
}
\label{fig:analysis_idpc}
\end{figure}

Figure~\ref{fig:analysis_idpc} presents the single iteration execution time obtained by running ten instances when increasing a graph size. It can be seen that all the computations but cut finding are relatively marginal and scale well with the graph size.
The result empirically validates the complexity analysis in Table~\ref{table:idpc_complexity} and supports the use of $|\widetilde{V}|$ and $|\widetilde{E}|$.

\subsection{Effect of graph topologies}
\label{appen:graph_topology}

Different motion planners generally generate fundamentally different topologies of a roadmap. In this evaluation, we study how a roadmap's topology affects our methods' performance. Besides PRM, we additionally implement a grid map used in search-based planning and SPARS~\cite{dobson2014sparse}, a sparse roadmap with a provable suboptimality guarantee, used in sampling-based motion planning. We omit the subroutine of the SPARS implementation aimed for suboptimality because this computation is highly costly and does not affect our analysis critically. Examples of roadmaps generated can be seen in Figure~\ref{fig:topologies_env}.

\begin{figure}[!htb]
\centering
\subfigure{\includegraphics[width=0.51\columnwidth]{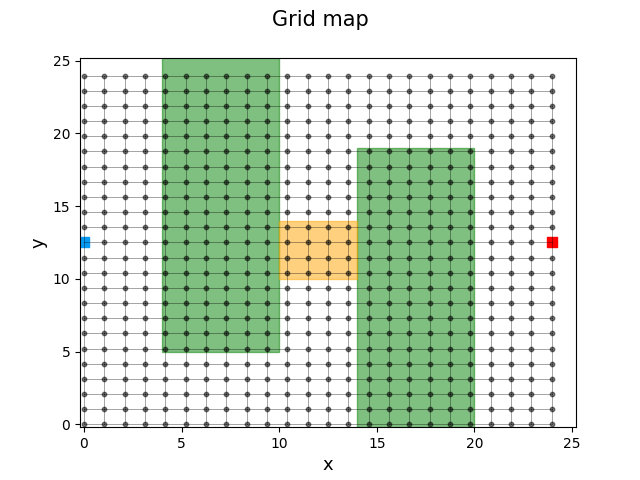}}
\hspace{-5mm}
\subfigure{\includegraphics[width=0.51\columnwidth]{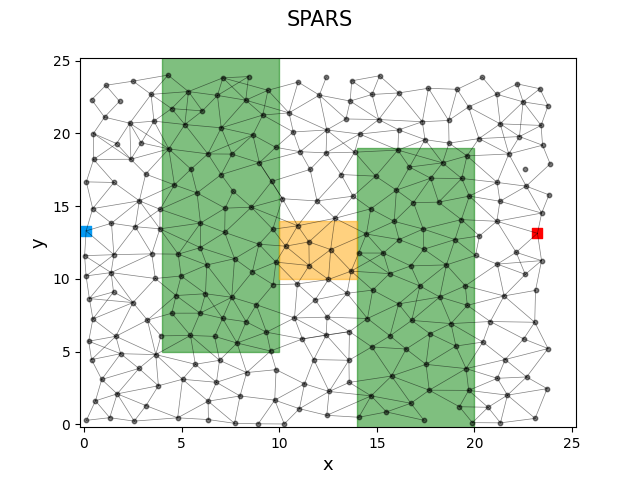}}
\caption{
Depiction of the graph topologies when using different roadmap types. The detailed explanation of the figures can be found in Figure~\ref{fig:comparison_env}. The left figure shows a grid map example used for search-based planning, and the right figure shows a roadmap example generated by SPARS used for sampling-based motion planning.
} 
\label{fig:topologies_env}
\end{figure}

\begin{figure}[!htb]
\centering
\subfigure{\includegraphics[width=0.51\columnwidth]{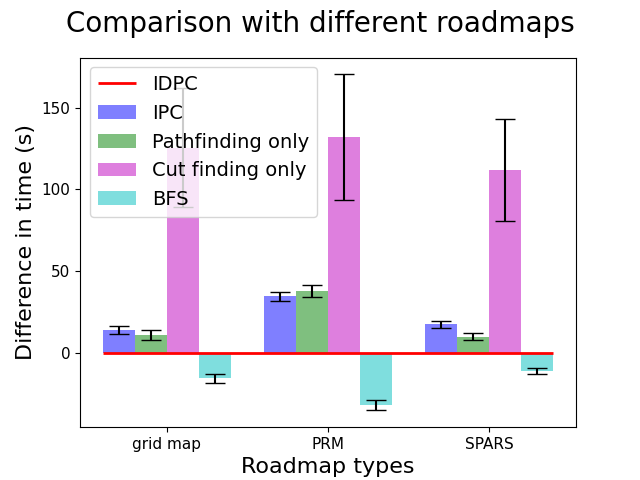}}
\hspace{-5mm}
\subfigure{\includegraphics[width=0.51\columnwidth]{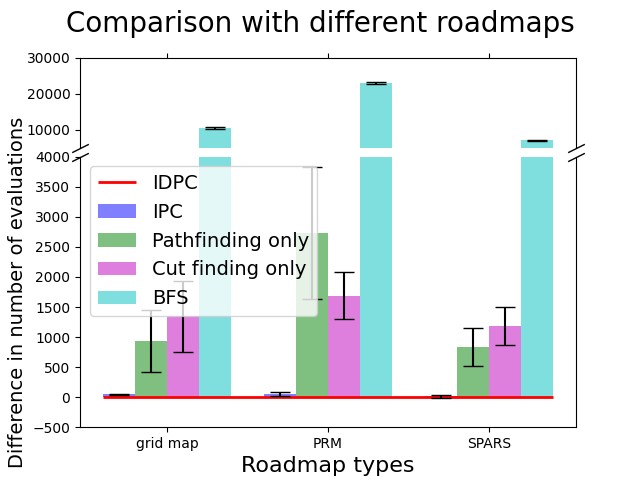}}
\caption{
Plots comparing the results when using different roadmap types. A grid map and SPARS in Figure~\ref{fig:topologies_env} and PRM in Figure~\ref{fig:comparison_env} are used for comparison. For all roadmap types, we generate roadmaps of $5000$ vertices with a different number of edges due to their different topologies. We observe that a lesser number of edges that exist in a grid map (\ie, $9660$ edges) and the SPARS roadmap (\ie, $7203$ edges) The detailed explanation of the figures can be found in Figure~\ref{fig:calibration}.
} 
\label{fig:topologies_plot}
\end{figure}

Overall, a similar trend of the previous evaluations can also be seen in Figure~\ref{fig:topologies_plot}. The pathfinding-only baseline performs relatively well on a grid map and the SPARS roadmap compared to the PRM roadmap. This is caused by a lesser number of edges that exist in the grid map and the SPARS roadmap. Nevertheless, \texttt{IDPC} still performs the best except for the completion time of the BFS-based baseline. With this result and the previous evaluations in Figure~\ref{fig:comparison_result}, we conjecture that the graph size affects the performance more significantly than the roadmap topology.

\subsection{Effect of more pathfinding executions}
\label{appen:more_pathfindings}

Since pathfinding is computationally less expensive than cut finding, we analyze the effect of increasing the number of pathfinding executions at each iteration on performance. To perform this analysis, we use \texttt{IPC}.

\begin{figure}[!htb]
\centering
\subfigure{\includegraphics[width=0.45\columnwidth]{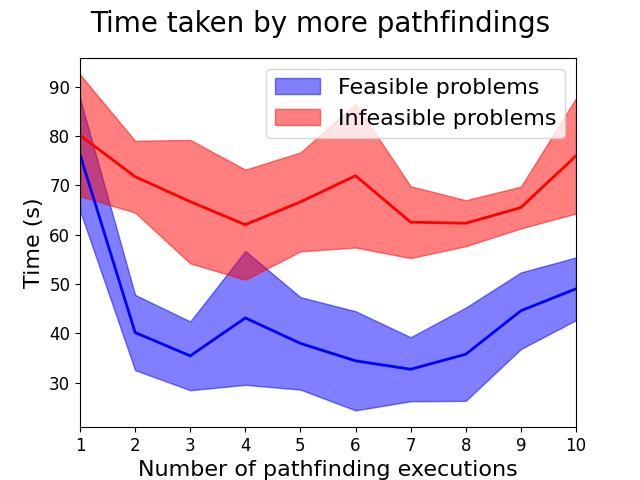}}
\subfigure{\includegraphics[width=0.45\columnwidth]{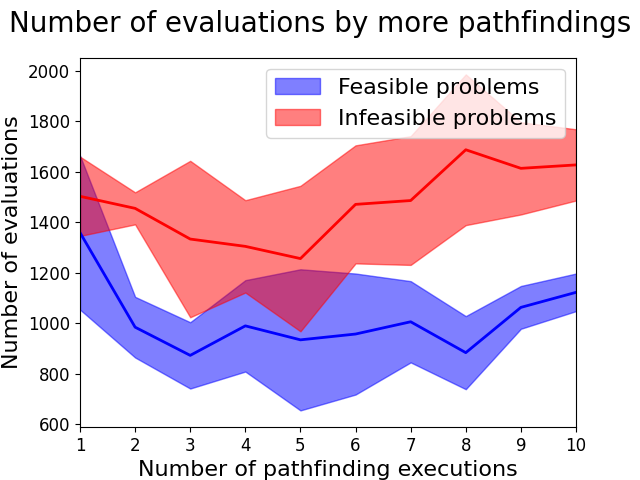}}
\caption{
Plots for the performance change as a function of the number of pathfinding executions. All results are obtained by solving twenty problem instances for each feasible and infeasible case. 
} 
\label{fig:more_pathfinding}
\end{figure}

Figure~\ref{fig:more_pathfinding} presents the performance change as the number of pathfinding executions increases. From the plots, three executions of pathfinding at each iteration perform approximately the best on average. It is, however, not statistically significantly better than other cases, as its confidence interval overlaps with others. Still, there is a trend on average that the completion time decreases at first and increases later as the number of pathfindings increases. One may consider the number of pathfinding executions at each iteration as a hyperparameter and benefit from learning the optimal number for their applications.

\clearpage
\subsection{Additional pseudo-codes for Algorithm~\ref{alg:idpc}}
\label{appen:pseudo_code}

\begin{algorithm}
\SetAlgoLined
\SetKwInOut{Input}{Input}
\SetKwInOut{Output}{Output}
\SetKwFunction{UpdateEdgeValue}{UpdateEdgeValue}
\Input{$\{\overline{G}_k\}_{k=1}^g, \widetilde{G}, P$}
\Output{$\{\overline{G}_k\}_{k=1}^g, \widetilde{G}$}

\texttt{subgraph\_ids}$\leftarrow\oldemptyset$

\For{$(v, v^\prime)\in P$} {
\texttt{is\_in\_abstract\_graph}$\leftarrow$\textsc{false}

\For{$\widetilde{v}\in \widetilde{V}$} {
\If{$\delta(\widetilde{v})=v$}{
\texttt{subgraph\_ids}$\leftarrow$\texttt{subgraph\_ids}$\cup\{\Delta(\widetilde{v})\}$

$k\leftarrow\Delta(\widetilde{v})$

\texttt{is\_in\_abstract\_graph}$\leftarrow$\textsc{true}

\Break
} 
} 

\If{\Not \texttt{is\_in\_abstract\_graph}} {
\texttt{subgraph\_ids}$\leftarrow$\texttt{subgraph\_ids}$\cup \{k\}$
}
} 

\texttt{path}$\leftarrow\oldemptyset$

\For{$(v, v^\prime)\in P$} {
\If{\texttt{subgraph\_ids}$[v]\not=$\texttt{subgraph\_ids}$[v^\prime]$}{
\tcp{\texttt{subgraph\_ids}$[v]$ implies an element in \texttt{subgraph\_ids} corresponding to $v$.}

\If{\texttt{path} is collision-free} {
$c(\widetilde{e})\leftarrow$\textsc{true} \tcp{Endpoint abstract vertices of $\widetilde{e}$ in $\widetilde{G}$ correspond to $v$ and $v^\prime$.}
} 

\texttt{path}$\leftarrow\oldemptyset$
} 

\Else{
$k\leftarrow$\texttt{subgraph\_ids}$[v]$

$\overline{G}_k\leftarrow$\UpdateEdgeValue$(\overline{G}_k, e^k=(v, v^\prime))$ \tcp{Set $p_w$ and $p_c$ to either $0$ or $\infty$ based on the collision result.}

\texttt{path}$\leftarrow$\texttt{path}$\cup\{(v, v^\prime)\}$
} 
} 

\Return{$\{\overline{G}_k\}_{k=1}^g, \widetilde{G}$, \texttt{subgraph\_ids}}

\caption{\texttt{ReflectPathEvaluation}}
\label{alg:reflect_path}
\end{algorithm}

\begin{algorithm}
\SetAlgoLined
\SetKwInOut{Input}{Input}
\SetKwInOut{Output}{Output}
\Input{$\{\overline{G}_k\}_{k=1}^g, P$, \texttt{subgraph\_ids}}
\Output{$k^*$}

\texttt{count\_collision\_edges}$\leftarrow0$

\texttt{max\_count}$\leftarrow0$

\For{$(v, v^\prime)\in P$} {
$k\leftarrow$\texttt{subgraph\_ids}$[v]$

\If{$e_k.p_c=0$} {
\tcp{$e_k=(v, v^\prime)$.} 
\texttt{count\_collision\_edges}\newline$\leftarrow$\texttt{count\_collision\_edges}$+1$
} 
\Else{
\If{\texttt{count\_collision\_edges}\newline$>$\texttt{max\_count}} {
\texttt{max\_count}$\leftarrow$\texttt{count\_collision\_edges}

$k^*\leftarrow k$

\texttt{count\_collision\_edges}$\leftarrow0$
} 
} 
} 

\Return{$k^*$}

\caption{\texttt{ChooseSubgraph}}
\label{alg:choose_subgraph}
\end{algorithm}

\begin{algorithm}
\SetAlgoLined
\SetKwInOut{Input}{Input}
\SetKwInOut{Output}{Output}
\Input{$\overline{G}_{k^*}, \widetilde{G}$}
\Output{\texttt{substarts}, \texttt{subgoals}}

\texttt{substarts}$\leftarrow\oldemptyset$, \texttt{subgoals}$\leftarrow\oldemptyset$

\For{$\widetilde{v}\in \widetilde{V}$} {
\If{$\Delta(\widetilde{v})=k^*$} {
\If{$\tau(\widetilde{v})=$\textsc{substart}} {
\texttt{substarts}$\leftarrow$\texttt{substarts}$\cup\{\widetilde{v}\}$
} 
\ElseIf{$\tau(\widetilde{v})=$\textsc{subgoal}} {
\texttt{subgoals}$\leftarrow$\texttt{subgoals}$\cup\{\widetilde{v}\}$
}
} 
} 

\For{$\widetilde{v}\in$\texttt{substarts}} {
\For{$\widetilde{v}^\prime\in$\texttt{subgoals}} {
\If{$c((\widetilde{v}, \widetilde{v}^\prime))=$\textsc{true}} {
\texttt{substarts}$\leftarrow$\texttt{substarts}$\smallsetminus\{\widetilde{v}\}$

\texttt{subgoals}$\leftarrow$\texttt{subgoals}$\smallsetminus\{\widetilde{v}^\prime\}$
} 
} 
} 

\Return{\texttt{substarts}, \texttt{subgoals}}

\caption{\texttt{ClusterSubstartsAndSubgoals}}
\label{alg:clustering}
\end{algorithm}

\begin{algorithm*}
\begin{multicols}{2}
\SetAlgoLined
\SetKwInOut{Input}{Input}
\SetKwInOut{Output}{Output}
\SetKwFunction{RemoveEdge}{RemoveEdge}
\SetKwFunction{IsContain}{IsContain}
\Input{$\{\overline{G}_{k^*}, \overline{G}_{k=g+1}\}, C_{k^*}, \widetilde{G}$}
\Output{$\{\overline{G}_{k^*}, \overline{G}_{k=g+1}\}, \widetilde{G}$}

\texttt{substarts}$\leftarrow\oldemptyset$, \texttt{substarts}$^\prime\leftarrow\oldemptyset$, 
\newline
\texttt{subgoals}$\leftarrow\oldemptyset$, \texttt{subgoals}$^\prime\leftarrow\oldemptyset$

\tcp{Assign substarts and subgoals to corresponding partitions.}

\For{$v\in V^{k^*}$} {

\For{$\widetilde{v}\in\widetilde{V}$} {
\If{$\delta(\widetilde{v})=v$} {
\If{$\tau(\widetilde{v})=$\textsc{substart}} {
\texttt{substarts}\newline$\leftarrow$\texttt{substarts}$\cup\{\widetilde{v}\}$
} 
\Else{
\texttt{subgoals}\newline$\leftarrow$\texttt{subgoals}$\cup\{\widetilde{v}\}$
} 

\Break
} 
} 
} 

\For{$v\in V^{k=g+1}$} {

\For{$\widetilde{v}\in\widetilde{V}$} {
\If{$\delta(\widetilde{v})=v$} {
\If{$\tau(\widetilde{v})=$\textsc{substart}} {
\texttt{substarts}$^\prime$\newline$\leftarrow$\texttt{substarts}$^\prime\cup\{\widetilde{v}\}$
} 
\Else{
\texttt{subgoals}$^\prime$\newline$\leftarrow$\texttt{subgoals}$^\prime\cup\{\widetilde{v}\}$
} 

\Break
} 
} 
} 

\tcp{Remove first-type edges from $\widetilde{G}$.}
\For{$\widetilde{v}\in$\texttt{substarts}} {
\For{$\widetilde{v}^\prime\in$\texttt{subgoals}$^\prime$} {
$\widetilde{G}\leftarrow$\RemoveEdge$(\widetilde{G}, (\widetilde{v}, \widetilde{v}^\prime))$
} 
} 

\For{$\widetilde{v}\in$\texttt{substarts}$^\prime$} {
\For{$\widetilde{v}^\prime\in$\texttt{subgoals}} {
$\widetilde{G}\leftarrow$\RemoveEdge$(\widetilde{G}, (\widetilde{v}, \widetilde{v}^\prime))$
} 
} 

\tcp{Remove third-type edges from $\widetilde{G}$.}
\For{$\widetilde{v}\in$\texttt{substarts}} {
\For{$\widetilde{v}^\prime\in$\texttt{substarts}$^\prime$} {
$\widetilde{G}\leftarrow$\RemoveEdge$(\widetilde{G}, (\widetilde{v}, \widetilde{v}^\prime))$
} 
} 

\For{$\widetilde{v}\in$\texttt{subgoals}} {
\For{$\widetilde{v}^\prime\in$\texttt{subgoals}$^\prime$} {
$\widetilde{G}\leftarrow$\RemoveEdge$(\widetilde{G}, (\widetilde{v}, \widetilde{v}^\prime))$
} 
} 

\tcp{Add new vertices and edges \linebreak (second type) to $\widetilde{G}$.}

\For{$(v, v^\prime)\in C_{k^*}$} {
\If{\Not\IsContain$(\widetilde{G}, v)$}{
\tcp{\IsContain$(\widetilde{G}, v)=$\textsc{true} if $\widetilde{G}$ contains $v$.}
$\widetilde{V}\leftarrow\widetilde{V}\cup\{v\}$

$\delta(v)\leftarrow v$, $\Delta(v)\leftarrow k^*$,  $\tau(v)\leftarrow$\textsc{subgoal}

\For{$\widetilde{v}\in$\{\texttt{substarts}, \texttt{subgoals}\}} {
$\widetilde{E}\leftarrow\widetilde{E}\cup\{(\widetilde{v}, v)\}$

$c((\widetilde{v}, v))\leftarrow$\textsc{false}
}
} 

\If{\Not\IsContain$(\widetilde{G}, v^\prime)$}{
$\widetilde{V}\leftarrow\widetilde{V}\cup\{v^\prime\}$

$\delta(v^\prime)\leftarrow v^\prime$, $\Delta(v^\prime)\leftarrow k=g+1$,
\newline$\tau(v^\prime)\leftarrow$\textsc{substart}

\For{$\widetilde{v}\in$\{\texttt{substarts}$^\prime$, \texttt{subgoals}$^\prime\}$} {
$\widetilde{E}\leftarrow\widetilde{E}\cup\{(\widetilde{v}, v^\prime)\}$

$c((\widetilde{v}, v^\prime))\leftarrow$\textsc{false}
}
} 
} 

\end{multicols}
\caption{\texttt{SubgraphPartition}}
\label{alg:partition}
\end{algorithm*}

\newpage
\subsection{\texttt{IDPC} example}
\label{appen:idpc_example}

\begin{figure*}[!htb]
\centering
\subfigure{\includegraphics[width=0.50\columnwidth]{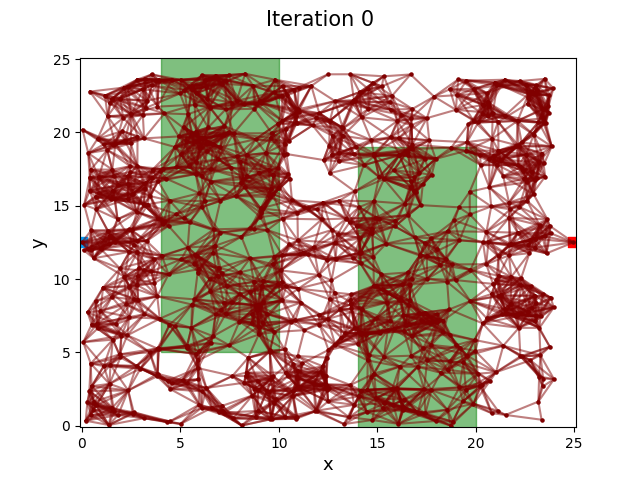}}
\hspace{-5mm}
\subfigure{\includegraphics[width=0.50\columnwidth]{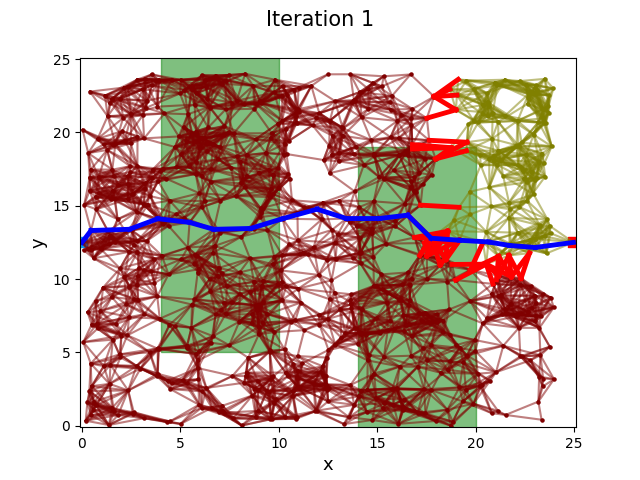}}
\hspace{-5mm}
\subfigure{\includegraphics[width=0.50\columnwidth]{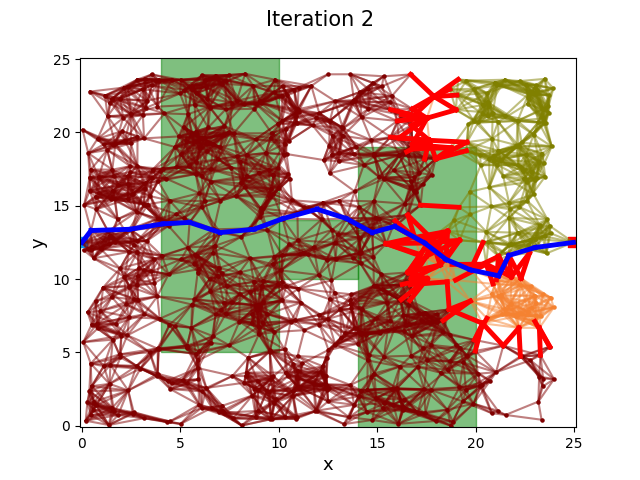}}
\hspace{-5mm}
\subfigure{\includegraphics[width=0.50\columnwidth]{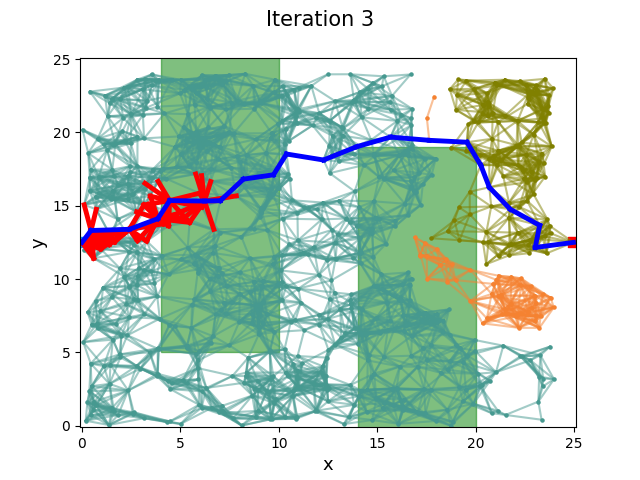}}
\subfigure{\includegraphics[width=0.50\columnwidth]{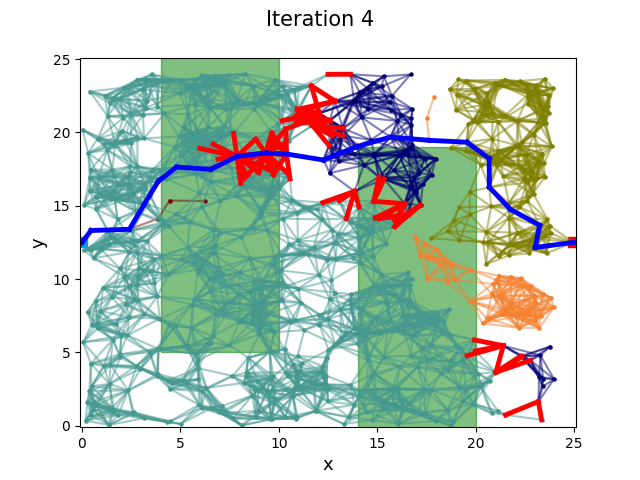}}
\hspace{-5mm}
\subfigure{\includegraphics[width=0.50\columnwidth]{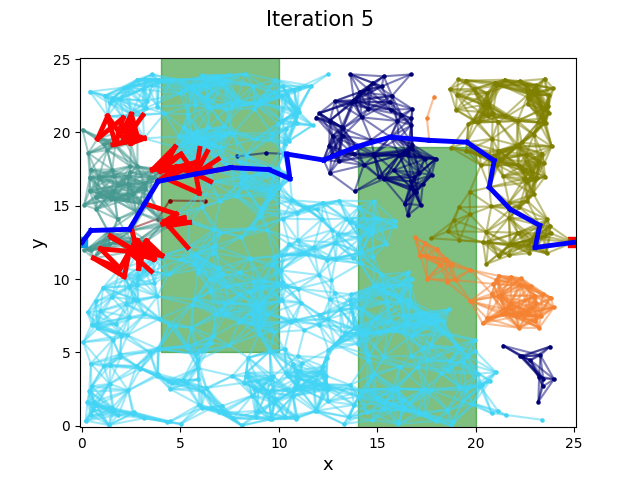}}
\hspace{-5mm}
\subfigure{\includegraphics[width=0.50\columnwidth]{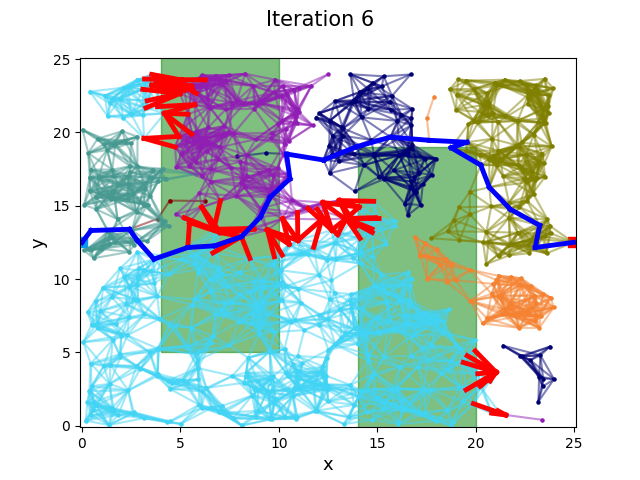}}
\hspace{-5mm}
\subfigure{\includegraphics[width=0.50\columnwidth]{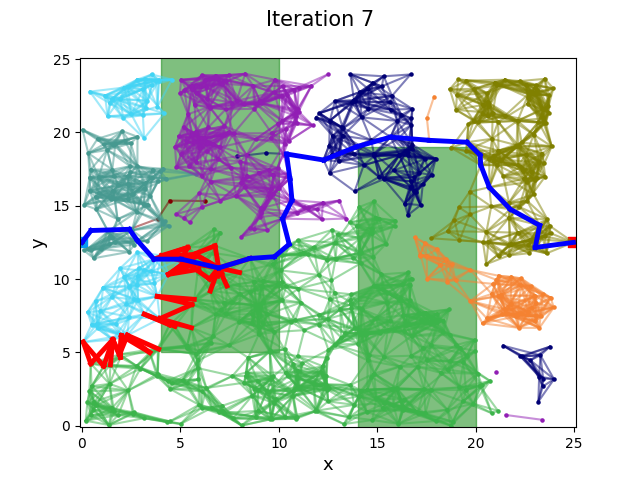}}
\subfigure{\includegraphics[width=0.50\columnwidth]{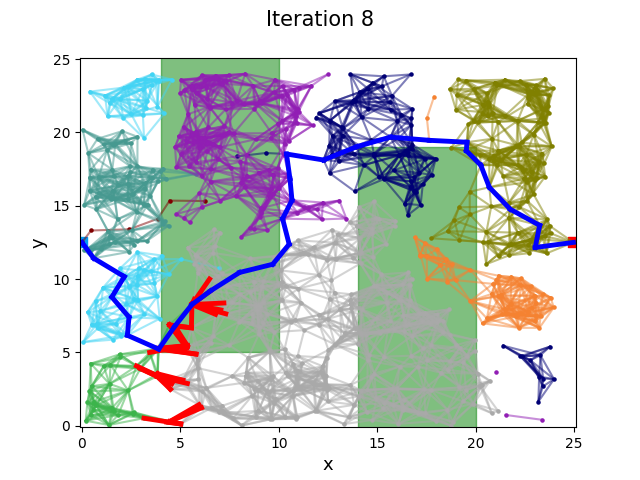}}
\hspace{-5mm}
\subfigure{\includegraphics[width=0.50\columnwidth]{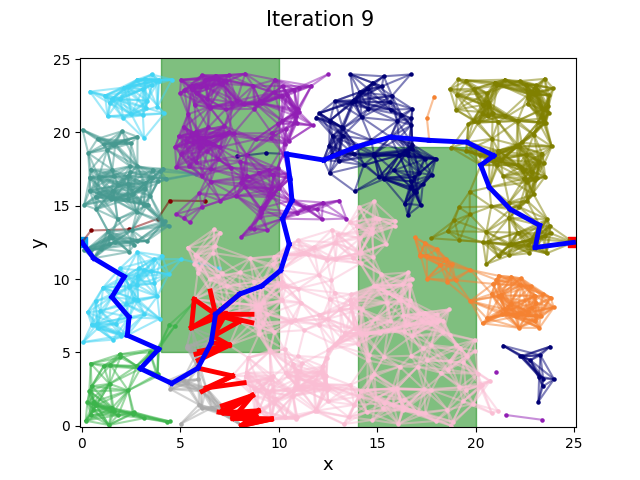}}
\hspace{-5mm}
\subfigure{\includegraphics[width=0.50\columnwidth]{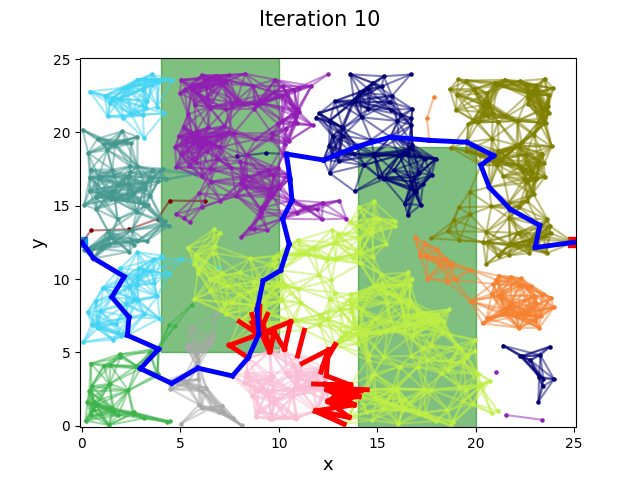}}
\hspace{-5mm}
\subfigure{\includegraphics[width=0.50\columnwidth]{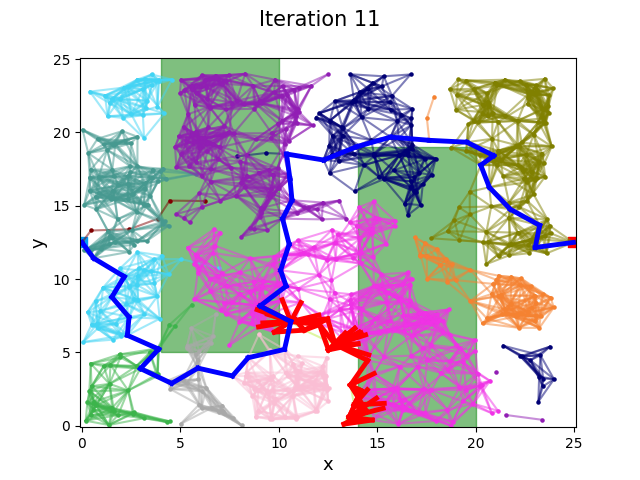}}
\subfigure{\includegraphics[width=0.50\columnwidth]{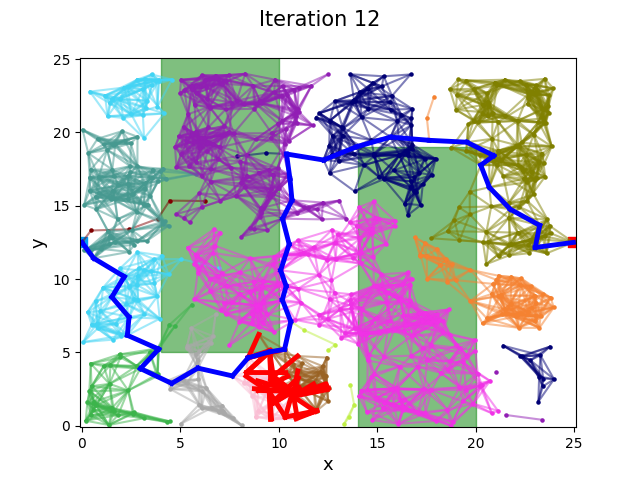}}
\caption{
Example of the \texttt{IDPC} procedure applied to a feasible problem on a prior roadmap consisting of $800$ vertices with $5000$ edges. Different colored graphs represent distinct subgraphs. The blue line and a set of red lines in each figure are a candidate path and a candidate cut found by pathfinding and cut-finding algorithms. A path is found at the $12$th iteration.
} 
\label{fig:idpc_path_example}
\end{figure*}

\begin{figure*}[!htb]
\centering
\subfigure{\includegraphics[width=0.50\columnwidth]{example_0.png}}
\hspace{-5mm}
\subfigure{\includegraphics[width=0.50\columnwidth]{example_1.png}}
\hspace{-5mm}
\subfigure{\includegraphics[width=0.50\columnwidth]{example_cut_2.png}}
\hspace{-5mm}
\subfigure{\includegraphics[width=0.50\columnwidth]{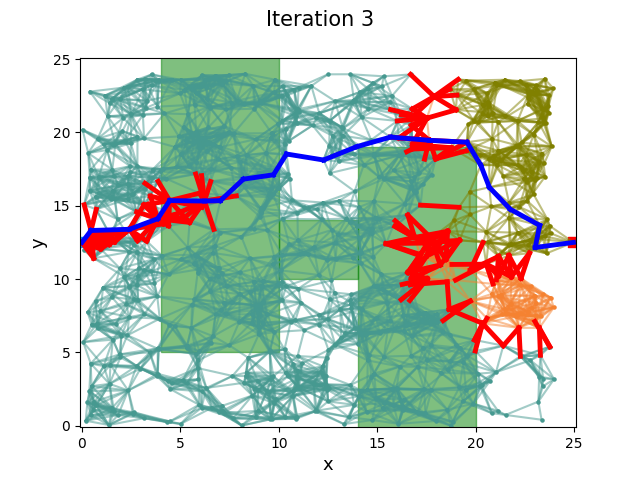}}
\subfigure{\includegraphics[width=0.50\columnwidth]{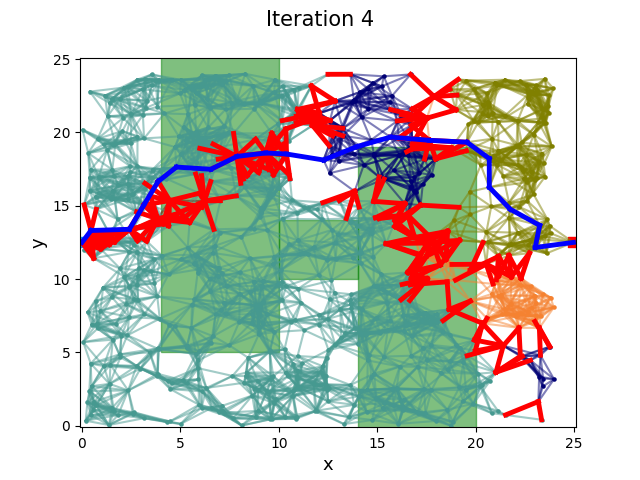}}
\hspace{-5mm}
\subfigure{\includegraphics[width=0.50\columnwidth]{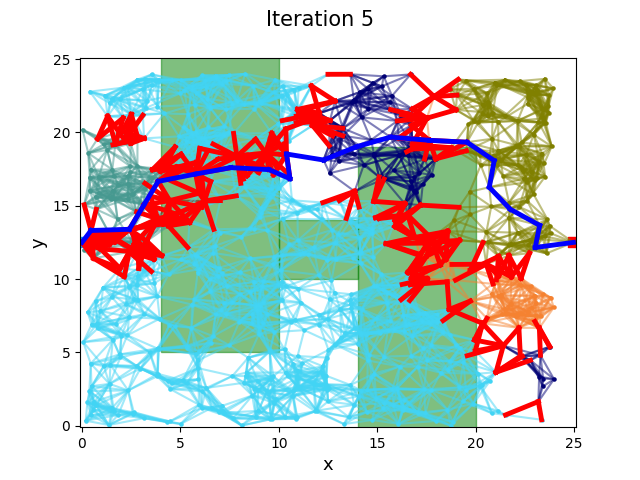}}
\hspace{-5mm}
\subfigure{\includegraphics[width=0.50\columnwidth]{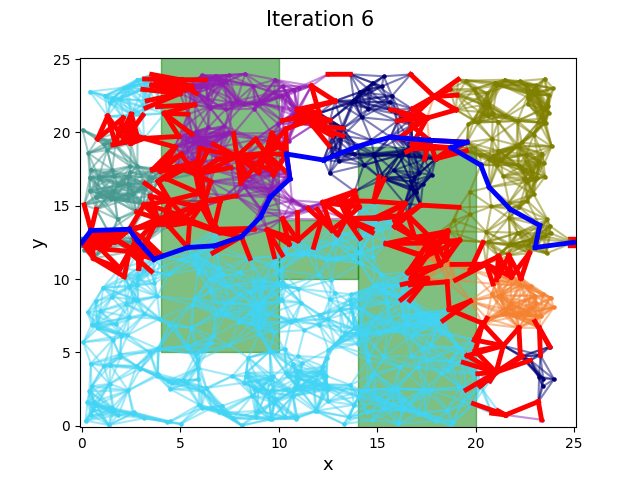}}
\hspace{-5mm}
\subfigure{\includegraphics[width=0.50\columnwidth]{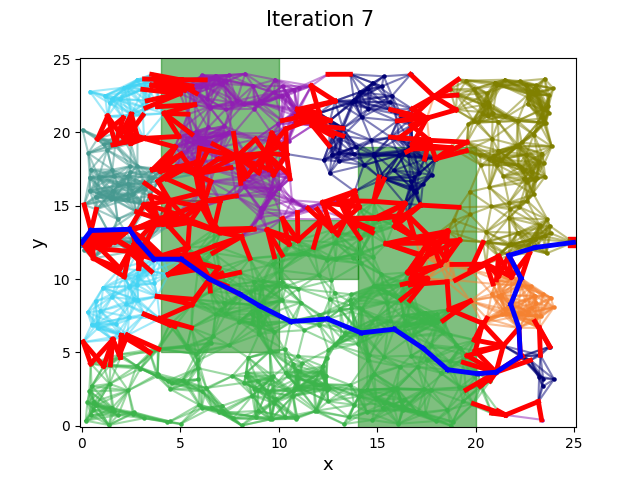}}
\subfigure{\includegraphics[width=0.50\columnwidth]{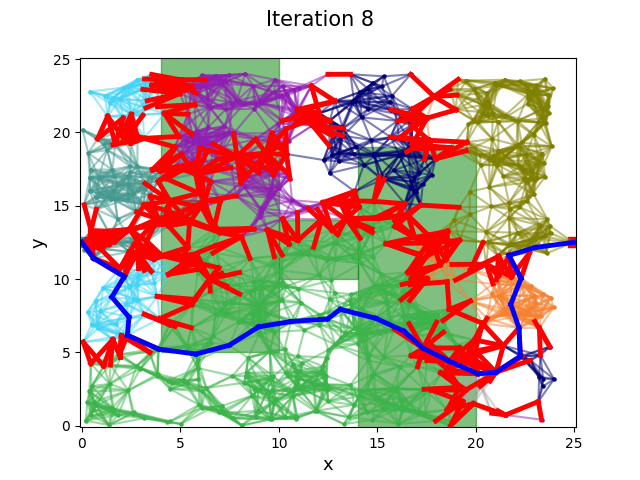}}
\caption{
Example of the \texttt{IDPC} procedure applied to an infeasible problem on a prior roadmap, with the same setting as shown in Figure~\ref{fig:idpc_path_example}. A cut is found at the $8$th iteration.
} 
\label{fig:idpc_cut_example}
\end{figure*}

\end{document}